\documentclass{article}

\usepackage{microtype}
\usepackage{graphicx}
\usepackage{subfigure}
\usepackage{subcaption}
\usepackage{booktabs} % for professional tables
\usepackage{makecell}
\usepackage{colortbl} % For colouring table cells
\usepackage[table]{xcolor}
\usepackage{placeins}
\usepackage{float}
\usepackage{multirow}
\usepackage[most]{tcolorbox}

\usepackage{hyperref}

\definecolor{darkgreen}{RGB}{73, 136, 196}
\definecolor{tablegray}{rgb}{0.8, 0.9, 0.7}
\definecolor{lightgray}{gray}{0.9}
\definecolor{cbad}{HTML}{C0392B}   % wrong / over-credit  (red)
\definecolor{cgood}{HTML}{1E7E34}  % corrected / gold      (green)
\definecolor{cbel}{HTML}{2E5597}   % score numbers         (blue)

\newcommand{\resync}[2]{\textcolor{cbel}{\textbf{#1} $\leftarrow$ #2}}

\newtcolorbox{casebox}[1][]{
enhanced, colback=white, colframe=black!55,
boxrule=0.6pt, arc=3pt, left=6pt, right=6pt, top=4pt, bottom=4pt,
fonttitle=\bfseries\small, coltitle=black, colbacktitle=black!8,
attach boxed title to top left={yshift=-2pt,xshift=6pt},
boxed title style={colframe=black!55,arc=2pt},
title={Case study: Fluency-Aware Routing},
#1
}

\usepackage[accepted]{icml2026}

\usepackage{amsmath}
\usepackage{amssymb}
\usepackage{mathtools}
\usepackage{amsthm}
\usepackage{enumitem}

\usepackage[capitalize,noabbrev]{cleveref}

\theoremstyle{plain}
\newtheorem{theorem}{Theorem}[section]

\theoremstyle{definition}
\newtheorem{definition}[theorem]{Definition}
\newtheorem{assumption}[theorem]{Assumption}
\theoremstyle{remark}

\icmltitlerunning{WASH: Linear Ensembles Wash Away Watermarks}

\begin{document}

\twocolumn[
  \icmltitle{Linear Ensembles Wash Away Watermarks: On the Fragility of Distributional Perturbations in LLMs}

  % It is OKAY to include author information, even for blind submissions: the
  % style file will automatically remove it for you unless you've provided
  % the [accepted] option to the icml2026 package.

  % List of affiliations: The first argument should be a (short) identifier you
  % will use later to specify author affiliations Academic affiliations
  % should list Department, University, City, Region, Country Industry
  % affiliations should list Company, City, Region, Country

  % You can specify symbols, otherwise they are numbered in order. Ideally, you
  % should not use this facility. Affiliations will be numbered in order of
  % appearance and this is the preferred way.
  \icmlsetsymbol{equal}{*}

  \begin{icmlauthorlist}
    \icmlauthor{Zhihao Wu}{equal,kcl}
    \icmlauthor{Gracia Gong}{equal,ic}
    \icmlauthor{Qinglin Zhu}{kcl}
    \icmlauthor{Yudong Chen}{waric}
    \icmlauthor{Runcong Zhao}{kcl}
  \end{icmlauthorlist}

  \icmlaffiliation{kcl}{Department of Informatics, King's College London, UK}
  \icmlaffiliation{ic}{Department of Mathematics, Imperial College London, UK}
  \icmlaffiliation{waric}{Department of Statistics, University of Warwick, UK}

  \icmlcorrespondingauthor{Yudong Chen}{yudong.chen@warwick.ac.uk}
  \icmlcorrespondingauthor{Runcong Zhao}{runcong.zhao@kcl.ac.uk}

  % You may provide any keywords that you find helpful for describing your
  % paper; these are used to populate the "keywords" metadata in the PDF but
  % will not be shown in the document
  \icmlkeywords{Machine Learning, ICML}

  \vskip 0.3in
]

% this must go after the closing bracket ] following \twocolumn[ ...

% This command actually creates the footnote in the first column listing the
% affiliations and the copyright notice. The command takes one argument, which
% is text to display at the start of the footnote. The \icmlEqualContribution
% command is standard text for equal contribution. Remove it (just {}) if you
% do not need this facility.

% Use ONE of the following lines. DO NOT remove the command.
% If you have no special notice, KEEP empty braces:
%\printAffiliationsAndNotice{}  % no special notice (required even if empty)
% Or, if applicable, use the standard equal contribution text:
\printAffiliationsAndNotice{\icmlEqualContribution}

\begin{abstract}
  Watermarking embeds statistical signatures in AI-generated text for detection and attribution. We reveal a fundamental vulnerability: when users access multiple models (today's reality), watermarks trivially fail.  Watermarks perturb output distributions away from the original, and in competitive markets, these perturbations are typically independent across providers. We theoretically prove that averaging output probability distributions recovers the unwatermarked distribution with up to a second-order error term. Empirically, simply averaging 3-5 models cancels out these perturbations. We introduce WASH (Watermark Attenuation via Statistical Hybridisation), which solves practical challenges in ensemble generation: vocabulary misalignment and tokenisation differences across heterogeneous models. Experiments across six watermarking schemes and three LLMs show that averaging across 3 models suppresses detection z-scores from 5-300 to \textbf{below 2} (below the detection threshold of 4) and reduces TPR@5\%FPR to \textbf{below 50\%}, while improving quality by \textbf{27.5\%} and running \textbf{6$\times$} faster than the best baseline on the long sequence generation. Our results suggest that robust AI-text detection via watermarking requires either accepting this fundamental vulnerability or unprecedented coordination among model providers. 
\end{abstract}

\section{Introduction}

The rapid deployment of large language models (LLMs) across critical applications, from educational assessment to content creation, has made reliable attribution mechanisms an urgent necessity \citep{wang2024survey, khasentino2025personal_health_llm, wang2024llms_education_survey}. Can we determine whether a given text was generated by an AI system? In an era where synthetic text has become increasingly indistinguishable from human writing, these questions have profound implications for academic integrity, content authenticity, and intellectual property protection \citep{wei2024jailbroken, yao2024survey_llm_security}. Watermarking has emerged as the technical solution to this attribution problem \citep{kirchenbauer2023watermark, liu2024survey_text_watermarking}. By embedding statistical signatures during text generation, unbiased watermarking promises to make AI-generated content detectable while minimising quality degradation and avoiding architectural changes \citep{hu2023unbiased, mao2024watermarking}.

However, current watermarking research relies on a critical simplifying assumption: adversaries have access to only a single watermarked model. In reality, users today can easily and freely access 10+ frontier LLMs through unified platforms (e.g., Hugging Face), such as GPT, LLaMA, Qwen, Mistral, and dozens of other capable models. The modern LLM landscape is not a monopoly but a hyper-competitive marketplace with multiple providers, and this competitive structure is the Achilles' heel of watermarking.
\begin{figure*}[ht!]
\centering
\includegraphics[width=2\columnwidth]{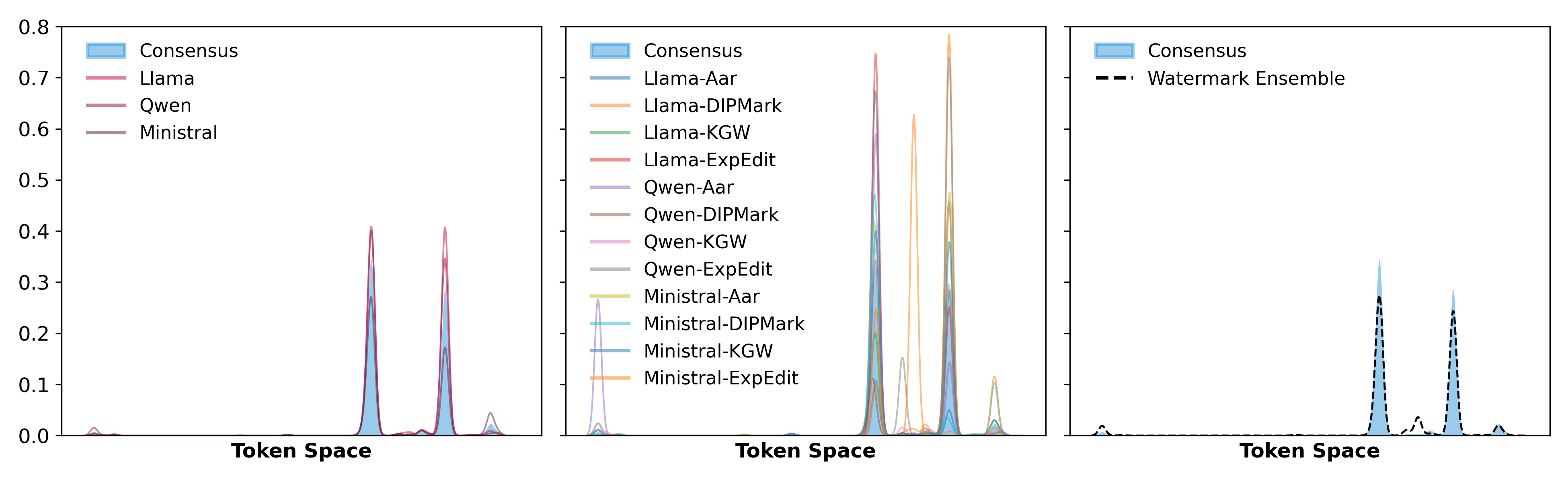}
\caption{\textbf{Effect of Linear Ensembles.} The ensemble average $\bar{p}_N$ \textbf{(Right)} neutralises the independent watermark perturbations $p_i$ \textbf{(Middle)}, effectively recovering the original consensus $p^*$\textbf{(Left)}, which is calculated by averaging the unwatermarked models.}
\label{fig:mix_results}
\end{figure*}
Our key insight is deceptively simple: watermarking works by perturbing a model's output distribution, and these perturbations are independent across providers due to different secret keys and architectural design. By querying multiple models in parallel and averaging their output distributions, these independent perturbations cancel out, recovering the original unwatermarked distribution (as illustrated in Figure~\ref{fig:mix_results}). 
We formalise this by proving that for any unbiased watermarking scheme with independent per-model perturbations, linear ensembling recovers the consensus distribution up to a second-order error with convergence rate $O(1/\sqrt{N})$. This establishes a fundamental limit rooted in market structure: competitive providers must use secret keys for provenance verification (guaranteeing independence) and maintain quality to retain users (bounding perturbation magnitude). Under these constraints, watermark signals are mathematically guaranteed to vanish under averaging. 

While theoretical result guarantees asymptotic removal, practical deployment faces critical obstacles in efficiency and coherence. Frameworks that seek to approximate watermark parameters via ``random selection probing'' \citep{chen2024mark} require extensive iterative querying, causing prohibitive inference latency. Meanwhile, attacks that concatenate tokens from heterogeneous models require predicting additional tokens at every time step and re-encoding the entire context at every switch \citep{huang2024toblend}, incurring significant computational inefficiency.
To bridge these gaps, we introduce \textsc{WASH} (Watermark Attenuation via Statistical Hybridisation). \textsc{WASH} employs fluency-aware routing to overcome vocabulary mismatches, enabling effective probability aggregation across distinct tokenisers. Furthermore, by leveraging parallel inference with response caching, \textsc{WASH} eliminates the need for iterative probing or context re-encoding, achieving \emph{constant-time per-token complexity} regardless of the ensemble size $N$. Extensive experiments demonstrate that \textsc{WASH} improves generation quality by $27.5\%$ while running $6\times$ faster than state-of-the-art removal baselines.

Our analysis reveals that watermarking faces a fundamental choice: True robustness that distinguishes AI from human text regardless of model mixing requires \textbf{coordinated watermarking}, namely some common signal shared across all models. This paper rigorously characterises this fundamental limitation through the following contributions:

\begin{enumerate}
    \item We theoretically prove that linear ensembling asymptotically recovers the original unwatermarked distribution. We demonstrate the convergence rate is $O(1/\sqrt{N})$ with $N$ independent models, revealing fundamental limits of unbiased watermarking in multi-provider settings.  
    \item We introduce \textsc{WASH}, a novel algorithm that overcomes the vocabulary mismatch problem in heterogeneous model ensembles. By employing fluency-aware routing and context re-synchronisation, \textsc{WASH} preserves semantic integrity while neutralising watermark signals.
    \item We conduct a systematic evaluation across six representative watermarking schemes and three LLMs under two complementary detection settings. Experiments demonstrate near-complete detection failure: (a) for generation-time detection, \textsc{WASH} suppresses z-scores from 5-300 (strongly detectable) to $<2$ (near random choice) with just 3 models; (b) for final-text detection, \textsc{WASH} lowers the TPR@5\%FPR on native sequence detectors to below 50\%, rendering the watermarks statistically undetectable.
\end{enumerate}

\section{Methodology}
\label{sec:methodology}

Watermarking exhibits a structural statistical vulnerability when outputs from multiple independently watermarked models are aggregated. Formally, we conceptualise watermarking as a stochastic perturbation applied to a shared underlying distribution $p^*$. While these perturbations are necessary for detection, they act as uncoordinated noise across different providers. Consequently, when outputs from multiple models are combined, the watermark signals interfere destructively, allowing the underlying consensus distribution to be asymptotically recovered.

We now formalise this perspective by characterising the relationship between the consensus distribution, the perturbed watermarked distributions, and their aggregation. Let $\mathcal{V}$ be the vocabulary and $\mathcal{X}$ be the space of possible contexts (i.e., sequences of tokens). We denote the probability distributions over the vocabulary as $\Delta(\mathcal{V}) = \left\{ \mathbf{p} \in \mathbb{R}^{|\mathcal{V}|} \;\middle|\; p_v \ge 0, \sum_{v \in \mathcal{V}} p_v = 1 \right\}$.

\subsection{Problem Formulation}
Conceptually, the probability distribution of any model $i$ deviates from the ideal semantic distribution $p_{GT}$ (align with human expert) due to shared systematic errors and provider-specific variations. We model this as:
\begin{equation*}
    p_{i}(v|x) \propto p_{GT}(v|x) \cdot \exp\left( \underbrace{\delta_{sys}(v,x)}_{\text{Shared Model Bias}} + \underbrace{\delta_i(v,x)}_{\text{Watermark}} \right).
    \label{eq:conceptual_model}
\end{equation*}
Here, $\delta_{sys}$ represents common biases inherent to current LLM architectures, while $\delta_i$ encapsulates the provider-specific signal, primarily the watermarking signal, but also including model-specific characteristics. 
\begin{definition}[Consensus Distribution and Watermarked Perturbation]
We define the consensus distribution $p^*(\cdot|x) \in \Delta(\mathcal{V})$ as the effective baseline of current models, absorbing the shared systematic bias: $p^*(v|x) \propto p_{GT}(v|x) \cdot \exp(\delta_{sys}(v,x))$. Consequently, the output distribution of a specific watermarked model $i$ defines a perturbed distribution $p_{i}(\cdot | x)$:
\begin{equation}
    p_{i}(v | x) = \frac{p^*(v | x) \cdot \exp(\delta_i(v,x))}{Z_{i}(x)},
\end{equation}
where $\delta_i: \mathcal{V} \times \mathcal{X} \to \mathbb{R}$ is the provider-specific perturbation function (the watermark signal), and $Z_{i}(x)$ is the normalisation term.
\end{definition}

The key insight enabling watermark removal is that perturbations across independent providers are statistically unbiased. We formalise this assumption below.

\begin{assumption}[Unbiased Perturbations]\label{assump:unbiased}
Consider a discrete set of providers indexed by $i$, where each provider is associated with a random perturbation vector $\delta_i(\cdot, x)$ that modulates the output distribution. We assume the following properties hold for $\{\delta_i\}_{i=1}^N$:

\textbf{(a) Bounded Magnitude:} The perturbation magnitude is uniformly bounded by a constant $\xi \leq 1$. Specifically, $\|\delta_i(\cdot, x)\|_\infty \leq \xi$ for all $x \in \mathcal{X}$ and providers $i$.

\textbf{(b) Independence:} The perturbation vectors $\delta_i$ are independent across providers.

\textbf{(c) Zero Mean:} The watermarking signals are centered around the consensus distribution in log-space, such that $\mathbb{E}[\delta_{i}(v, x)] = 0$ for all $v \in \mathcal{V}$, $x \in \mathcal{X}$, and providers $i$.

\textbf{(d) Bounded Expected Variance:} The expected value of the weighted variation of the perturbations across the vocabulary (weighted by the consensus probability) is bounded by a constant $\eta^2$. That is, for all $x \in \mathcal{X}$:
    \begin{equation} \label{eq:expected_var_bound}
    \mathbb{E}\bigl[\mathrm{Var}_{u \sim p^*}(\delta_i(u, x))\bigr] \leq \eta^2,
    \end{equation}
    where 
    \begin{equation}
    \label{eq:var_def}
    \mathrm{Var}_{u \sim p^*}(\delta_i(u, x)) := \sum_u p^*(u) \bigl(\delta_i(u) - \sum_{v} p^*(v)\delta_i(v)\bigr)^2.
    \end{equation}
\end{assumption}

This assumption is natural in the multi-provider setting: (a) Market forces impose a strict upper bound on perturbation magnitude ($\xi$), as any watermark strong enough to violate the linear approximation (large $\delta$) would result in perceptible quality degradation.
(b) Each provider employs watermark configurations unknown to each other, making the perturbations behave as independent random variables. This can be relaxed to a grouped setting where providers share common latent factors: under between-group independence and within-group conditional independence, the same convergence guarantee holds; see Appendix~\ref{app:grouped}. 
(c) Providers independently optimise for quality, with no reason to systematically favor or disfavor specific tokens beyond what $p^*$ suggests. We further stress-test this zero-mean assumption under deliberately biased watermark perturbations in Appendix~\ref{app:biased_robustness}.
(d) Finally, providers maximise utility by prioritising stability on high-probability tokens; the $p^*$-weighting ensures that significant variance is restricted to rare tokens where it least impacts the overall text quality.

\subsection{Watermark Removal via Linear Ensembles}
Given $N$ independent models, we propose to neutralise the watermarking signals and recover the consensus distribution via a uniform mixture, a method we term \textsc{WASH}. Since the perturbations $\delta_i$ are uncorrelated across providers, they act as noise that can be averaged out. Given a fixed content $x\in \mathcal{X}$, the aggregated probability distribution is defined as 
\begin{equation} \label{eq:aggregated}
    \bar{p}_N(\cdot|x) := \frac{1}{N} \sum_{i=1}^{N} p_{i}(\cdot|x).
\end{equation}
We implement this removal process as an autoregressive ensemble. The generation process can be formalised as a recursive sequence. Let $x_{<t} = (x_1, \dots, x_{t-1})$ denote the context at step $t$. The next token $x_t$ is sampled from the aggregated distribution:
\[
x_t \sim \bar{p}_N(\cdot | x_{<t}).
\]
The process repeats until a termination token is generated.
Having formalised the generation process, we now provide theoretical guarantees for its effectiveness. 

\begin{theorem}[Convergence to Consensus Distribution] \label{thm:convergence}
Under Assumption \ref{assump:unbiased}, for any fixed context $x$, let $\bar{p}_N(\cdot|x) = \frac{1}{N}\sum_{i=1}^N p_i(\cdot|x)$ be the aggregated distribution. For any $\delta > 0$, with probability at least $1-\delta$, the $\ell_\infty$ distance between the aggregated distribution and the consensus distribution $p^*(\cdot|x)$ satisfies:
\begin{equation}
    \bigl\| \bar{p}_N(\cdot|x) - p^*(\cdot|x) \bigr\|_{\mathrm{\infty}} \lesssim \sqrt{\frac{\log(|\mathcal{V}|/\delta)}{N}} + \eta^2,
\end{equation}
where $|\mathcal{V}|$ denotes the vocabulary size, and $\eta^2$ is the upper bound on the expected weighted variance of the perturbation as in~\eqref{eq:expected_var_bound}. 
\end{theorem}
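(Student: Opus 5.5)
Fix the context $x$ and suppress it from the notation. The plan is to split the error into a zero-mean fluctuation term and a deterministic bias term,
\[
\bar p_N - p^* = \underbrace{\tfrac1N\textstyle\sum_{i}\bigl(p_i - \mathbb{E}[p_i]\bigr)}_{\text{fluctuation}} + \underbrace{\tfrac1N\textstyle\sum_i\bigl(\mathbb{E}[p_i]-p^*\bigr)}_{\text{bias}},
\]
and bound each coordinatewise.

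\textbf{Fluctuation term.} For each fixed $v$, the variables $p_i(v)$ are independent across $i$ by Assumption~\ref{assump:unbiased}(b). They also lie in $[0,1]$; in fact $p_i(v)\in[p^*(v)e^{-2\xi},p^*(v)e^{2\xi}]$, since $Z_i\in[e^{-\xi},e^{\xi}]$ by (a). Hoeffding's inequality therefore gives $\Pr\bigl(|\frac1N\sum_i(p_i(v)-\mathbb{E}p_i(v))|>t\bigr)\le 2e^{-2Nt^2}$. A union bound over the $|\mathcal V|$ coordinates with $t=\sqrt{\log(2|\mathcal V|/\delta)/(2N)}$ yields the $\sqrt{\log(|\mathcal V|/\delta)/N}$ term with probability at least $1-\delta$.

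\textbf{Bias term.} This is the main step. We need $|\mathbb{E}[p_i(v)]-p^*(v)|\lesssim \eta^2$ uniformly in $v$ and $i$. Write $\bar\delta_i=\sum_u p^*(u)\delta_i(u)$ and $\tilde\delta_i(u)=\delta_i(u)-\bar\delta_i$. Since $p_i$ is invariant to additive constants in $\delta_i$,
\[
p_i(v)=\frac{p^*(v)e^{\tilde\delta_i(v)}}{\sum_u p^*(u)e^{\tilde\delta_i(u)}}.
\]
Because $|\tilde\delta_i|\le 2\xi\le 2$, Taylor expansion with bounded remainder gives $e^{s}=1+s+O(s^2)$ with absolute constants. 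The denominator is then $1+0+O(\mathrm{Var}_{p^*}(\delta_i))$, since the linear term vanishes by centering. The numerator is $p^*(v)\bigl(1+\tilde\delta_i(v)+O(\tilde\delta_i(v)^2)\bigr)$. Hence
\[
p_i(v)=p^*(v)+p^*(v)\tilde\delta_i(v)+R_i(v),
\]
where $|R_i(v)|\lesssim p^*(v)\tilde\delta_i(v)^2+p^*(v)\mathrm{Var}_{p^*}(\delta_i)$.

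Taking expectations, the first-order term has mean $p^*(v)(\mathbb{E}\delta_i(v)-\sum_u p^*(u)\mathbb{E}\delta_i(u))=0$ by (c). The term $p^*(v)\mathrm{Var}_{p^*}(\delta_i)$ has expectation at most $\eta^2$ by (d).

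The main obstacle is the remaining remainder $p^*(v)\,\mathbb{E}[\tilde\delta_i(v)^2]$, because the hypothesis controls only the $p^*$-\emph{weighted} variance and not individual coordinates. I would handle it with the elementary bound $p^*(v)\tilde\delta_i(v)^2\le\sum_u p^*(u)\tilde\delta_i(u)^2=\mathrm{Var}_{p^*}(\delta_i)$. Taking expectations, this is at most $\eta^2$ by (d). Thus $|\mathbb{E}p_i(v)-p^*(v)|\le C\eta^2$ for an absolute constant $C$; the constants depend only on the bound $\xi\le1$.

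Combining the two bounds via the triangle inequality in $\ell_\infty$ gives the stated rate. As a side benefit, the same expansion shows that the fluctuation variance is $O(\eta^2)$, so Bernstein's inequality could sharpen the first term; Hoeffding is sufficient for the claim as stated.
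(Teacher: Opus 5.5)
Your proposal is correct and follows essentially the same route as the paper: the same fluctuation/bias split, Hoeffding plus a union bound over $\mathcal{V}$, centring via the shift-invariance of the softmax so that the first-order term vanishes by (c), and the same key inequality $p^*(v)\tilde\delta_i(v)^2\le\sum_u p^*(u)\tilde\delta_i(u)^2=\mathrm{Var}_{u\sim p^*}(\delta_i(u))$, which turns the pointwise second-order remainder into a bound of order $\eta^2$. The only difference is in the expansion itself. The paper sandwiches $p_i(v)-p^*(v)$ using inequalities stated for $|y|\le 1$, whereas you use a Taylor remainder with constants valid on $|\tilde\delta_i|\le 2\xi\le 2$. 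Your version is slightly more careful, since centring can double the perturbation bound beyond the range the paper's inequalities assume.
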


\begin{figure*}[ht!]
\centering
\includegraphics[width=1.8\columnwidth]{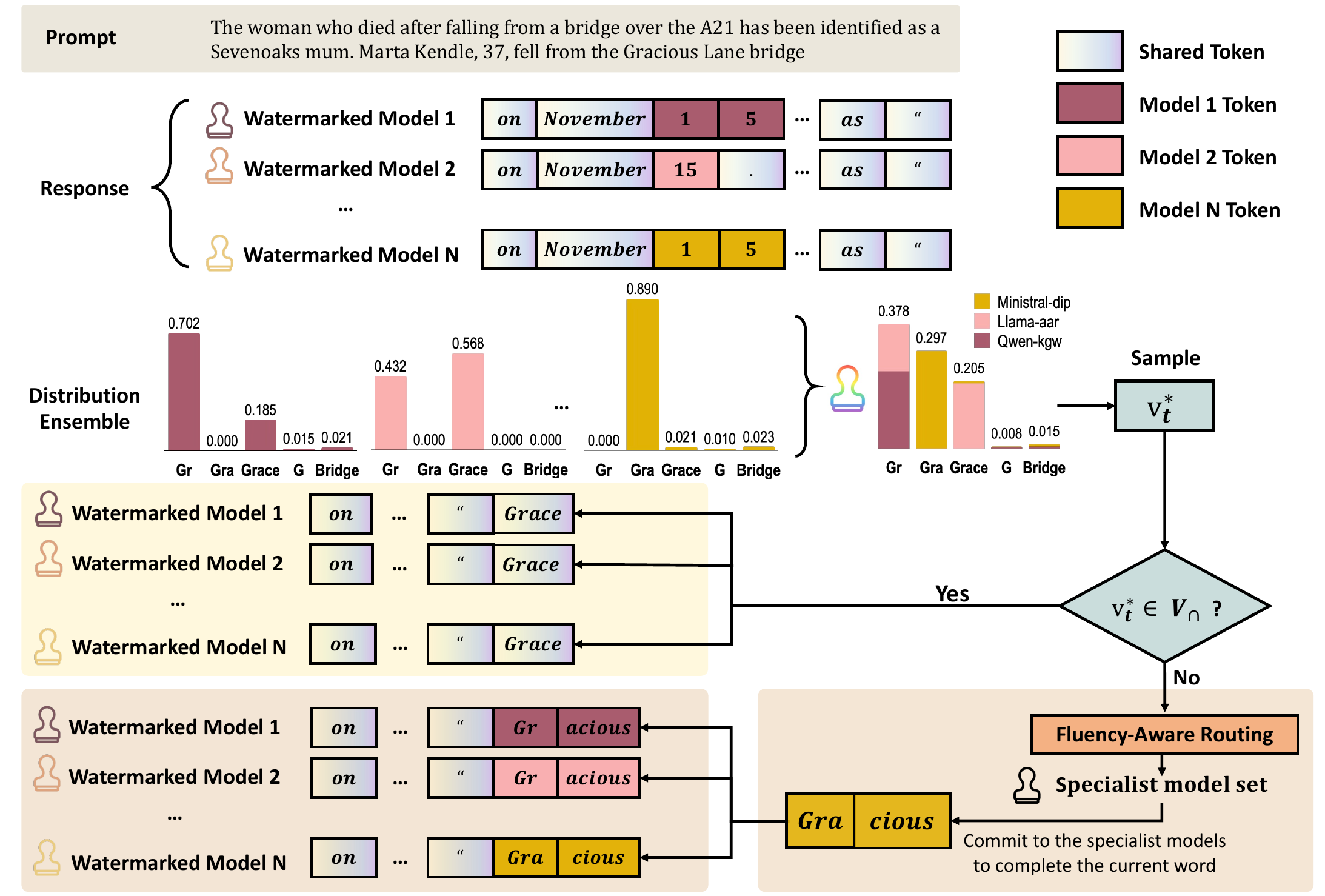}
\caption{\textbf{WASH Framework Overview.} The method ensembles $N$ independent models to neutralise watermarks via probability averaging. To resolve vocabulary mismatches, \textit{Fluency-Aware Routing} commits to the specialist models ($K^*$) whenever a token falls outside the shared vocabulary intersection ($\mathcal{V}_{\cap}$), employing \textit{re-tokenisation} to synchronise context and ensure semantic alignment across heterogeneous tokenisers.}
\label{fig:watermark_fragility}
\end{figure*}

\begin{proof}[Proof Sketch]
The proof relies on decomposing the approximation error into a \textit{stochastic deviation term} and a \textit{systematic bias term}. By the triangle inequality, the distance between the aggregated distribution $\bar{p}_N$ and the consensus distribution $p^*$ can be bounded as:
\begin{equation*}
    \| \bar{p}_N - p^* \|_\infty \leq \underbrace{\| \bar{p}_N - \mathbb{E}[\bar{p}_N] \|_\infty}_{\text{Stochastic Deviation}} + \underbrace{\| \mathbb{E}[\bar{p}_N] - p^* \|_\infty}_{\text{Systematic Bias}}.
\end{equation*}

\textit{Bounding the Stochastic Deviation.} Since the providers are independent (Assumption \ref{assump:unbiased}(b)), the aggregated probability $\bar{p}_N(v)$ is the average of independent bounded random variables. A direct application of Hoeffding's inequality shows that $\bar{p}_N$ concentrates around its expectation $\mathbb{E}[p_i]$ at a rate of $O(1/\sqrt{N})$.

\textit{Bounding the Systematic Bias.} The term $\| \mathbb{E}[p_i] - p^* \|_\infty$ arises from the nonlinearity of the softmax function. Although the perturbations $\delta_i$ are zero-mean (Assumption \ref{assump:unbiased}(c)), the expected output probability is biased (i.e., $\mathbb{E}[p_i] \neq p^*$) due to the convexity of the exponential function. 
We utilise the shift-invariance of the softmax to centre the perturbations and apply a second-order Taylor expansion to $p_i(v) \propto p^*(v)\exp(\delta_i(v))$. The first-order (linear) terms vanish due to the zero-mean assumption. The remaining error is dominated by the second-order terms, which are controlled by the variance bound $\eta^2$ defined in Assumption \ref{assump:unbiased}(d).
We provide the detailed proof in Appendix \ref{app:convergence}.
\end{proof}

\subsection{Preserving Semantic Integrity in Heterogeneous Ensembles}

While linear ensemble effectively neutralises watermark perturbations in the limit of $N$, strictly enforcing this over heterogeneous topologies introduces a vocabulary mismatch problem \cite{yu2024breaking, chen2025llmensemble}. Let $\mathcal{V}_i$ denote the vocabulary of model $i$. A standard baseline restricts sampling to the vocabulary intersection $\mathcal{V}_{\cap} = \bigcap_{i=1}^N \mathcal{V}_i$, thereby ensuring distributional consensus across all models. However, this conservative approach induces \textit{expressivity bottleneck}. Restricting generation to the vocabulary intersection $\mathcal{V}_{\cap}$ often excludes semantic ground truth tokens, such as specific entities or technical terms that are absent in at least one model's vocabulary (i.e., $\mathcal{V}_{\Delta} = (\bigcup \mathcal{V}_i) \setminus \mathcal{V}_{\cap}$).

Conversely, projecting onto the vocabulary union $\mathcal{V}_{\cup} = \bigcup_{i=1}^N \mathcal{V}_i$ introduces \textit{granularity mismatch}: as illustrated in Figure \ref{fig:watermark_fragility}, heterogeneous tokenisers may represent the same semantic unit at different granularities. For instance, one model represents ``Gracious'' as ``[Gr], [acious]'' while another decomposes it into ``[Gra], [cious]''. Aggregating over the union forces the imputation of undefined probabilities for different disjoint tokens (e.g., the token ``[Gra]'' does not exist in the first model's vocabulary), thereby diluting the semantic fidelity. To reconcile vocabulary heterogeneity with watermark removal, we propose a dynamic switching process that routes generation between the ensembled distribution and a set of local ``specialist'' models based on support constraints. The generation at step $t$ is determined by the policy outlined below:

\paragraph{Generation via Union-based Ensemble}
To avoid semantic truncation, we construct the ensemble distribution $\bar{p}_N$ over the vocabulary union $\mathcal{V}_{\cup}$. For any model $i$ with token $v \notin \mathcal{V}_i$, we assign $p_i(v|x_{<t}) = 0$, so that the ensemble average reduces to $\bar{p}_N(v|x_{<t}) = \frac{1}{N}\sum_{i: v \in \mathcal{V}_i} p_i(v|x_{<t})$. For any token supported by only one model, this caps $\bar{p}_N(v|x_{<t})$ at approximately $1/N$ of that model's own probability. However, this scaling does not indicate a loss of confidence: models lacking $v$ in their vocabulary still allocate comparable probability mass to the same semantic content, but route it through alternative tokenisation paths (e.g., one model emits ``Gracious'' as an atomic token while others emit the same word as ``[Gr][acious]''). The semantic signal is therefore dispersed across divergent tokenisation paths rather than discarded. Crucially, unlike the intersection approach, every token in $\mathcal{V}_\cup$ remains accessible. The next token is then sampled from this union-aggregated distribution: $x_t \sim \bar{p}_N(\cdot | x_{<t}).$

\paragraph{Fluency-Aware Routing}
If $x_t \in \mathcal{V}_{\cap}$ (i.e., the token is valid across all models), we directly output $x_t$.
However, a critical challenge arises when the selected token $x_t \in \mathcal{V}_{\Delta}$, it is undefined to the subset of models that lack this token in their vocabulary. Accepting such a token would break the autoregressive chain, as the incompatible models cannot process $x_t$ as valid input context for the subsequent generation step ($t+1$). To preserve fluency, we employ a randomised routing mechanism that restricts subsequent sampling to ``specialist'' models whose vocabularies admit the committed tokens. Let $K$ denote the set of all available models and we initialise $K_t = K$. For each $t' \in [t+1, \tau(t)]$, where $\tau(t)$ denotes word completion, we recurrently update the specialist set as $$K_{t'} = \{i : x_{t'-1} \in \mathcal{V}_i\} \cap K_{t'-1}.$$ We then uniformly sample a model from the specialist set before generating the token at $t'$: $$k_{t'} \sim \text{Uniform}(K_{t'}), \quad x_{t'} \sim p_{k_{t'}}(\cdot | x_{<t'}).$$ Crucially, we avoid selecting the model using maximum likelihood principles, as likelihood disparities may themselves be watermark-induced artifacts. Random selection ensures the routing decision is orthogonal to watermark signals, preventing detectors from exploiting systematic biases in model selection.

While routing temporarily re-admits the watermark signal $\delta_{\text{wm}}^{(k_{t'})}$, such events are sparse and confined to vocabulary boundaries. Moreover, our strategy consistently engages as many specialist models as possible, ensuring that ensembling remains active throughout the generation. Critically, even if some watermark signal exhibits, watermark artefacts are temporally fragmented: they alternate stochastically across different specialists and are interspersed with non-routing tokens. This prevents the accumulation of sustained statistical regularities required for reliable watermark identification. We provide a complete generation example in Appendix \ref{app:fluency-aware-example}.

\paragraph{Context Re-synchronisation} 
Routing to specialist models creates \textbf{context de-synchronisation}: non-selected models do not observe the generated tokens in their native tokenisation. To maintain coherence, we apply a decode-encode cycle. We decode the generated string with the last selected specialist model $S_t = \text{Decode}_{k_{\tau(t)}}(\hat{x}_{t:\tau(t)})$, and update each model's context via:
\begin{equation} x_{<\tau(t)+1}^{(i)} \leftarrow x_{<t}^{(i)} \oplus \mathcal{T}_i(S_t), \quad \forall i \in [N], \end{equation} 
where $\mathcal{T}_i$ is model $i$'s tokeniser. This ensures all models observe semantically equivalent contexts despite tokenisation differences, preserving valid probability aggregation in subsequent ensemble steps.

\section{Experiment}

\begin{figure*}[ht]
\centering
\includegraphics[width=1.8\columnwidth]{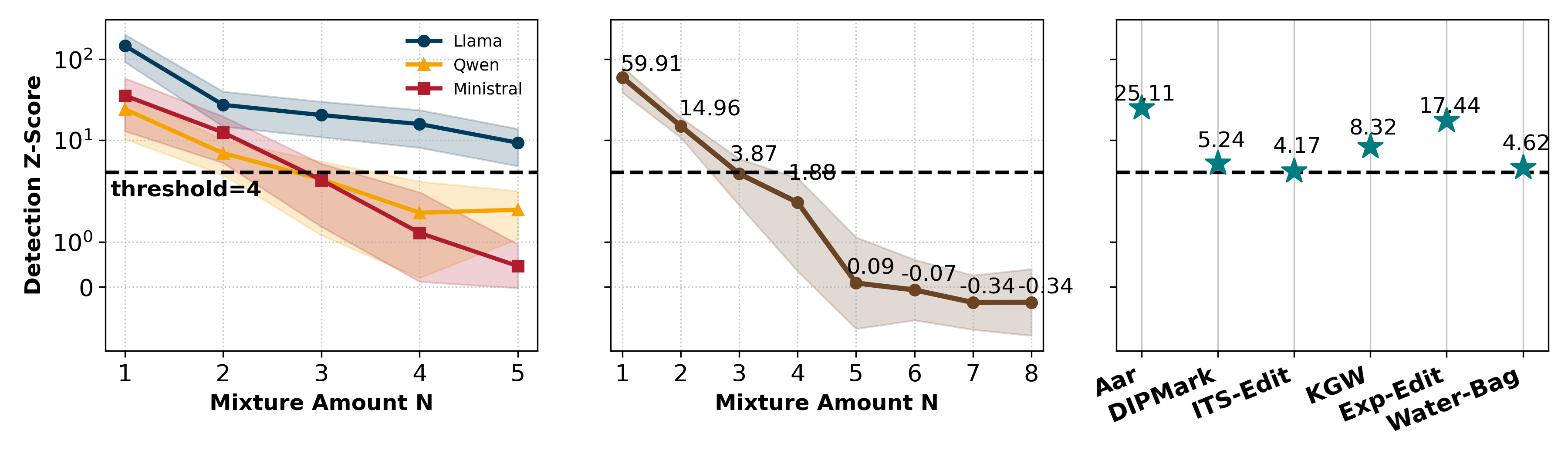}
\caption{\textbf{Watermark signal decay under different ensemble configurations.} We measure detection strength (z-score) as ensemble size $N$ increases. \textbf{(a)} Fixed base model with $N$ independent watermark keys. \textbf{(b)} $N$ independent models, each with independent watermarks. \textbf{(c)} 3 independent base models sharing the same watermark, with the signals coordinated across models: signal persists, demonstrating that coordination defeats averaging attacks.} 
\label{fig:decay}
\end{figure*}
\subsection{Experimental Setup \label{setup}}
\textbf{Models and Watermarks.}
We conduct experiments on three widely used pre-trained LLMs: \texttt{Qwen3-8B} \citep{qwen3technicalreport}, \texttt{Llama-3.1-8B} \citep{llama3}, and \texttt{Ministral3-8B} \citep{liu2026ministral3}. To ensure broad coverage, we evaluate six representative watermarking schemes spanning different design paradigms: \texttt{AAR}, which relies on uniform distribution \cite{Aaronson2022gpt}; \texttt{DIPMark}, which applies logit reweighting \cite{wu2023resilient}; \texttt{ITSEdit}, based on inverse transform sampling with exponential minimum sampling \cite{kuditipudi2023robust}; \texttt{KGW}, which encourages sampling to a green list of tokens \cite{kirchenbauer2023watermark}; \texttt{Exp-Edit}, which uses key-based transformations \cite{kuditipudi2023robust}; and \texttt{Water-Bag}, which combines a set of watermark keys and their mathematical inversions during generation to mask statistical biases \cite{liu2024can}.

\textbf{Baselines.}
We compare \textsc{WASH} with two classes of watermark removal attacks.

\textit{Generation-time attacks} produce the output directly under the removal procedure, matching the \textsc{WASH} setting: \\
(1) \texttt{De-mark} \citep{chen2024mark}: A watermark removal method designed against red-green-list watermarking. It first identifies the green-list tokens whose logits are biased by the watermark using crafted prompts, then removes the bias to restore the original generation distribution.  \\
(2) \texttt{ToBlend} \citep{huang2024toblend}: A model generation mixture method against AI text detections. It blends multiple model generations by predicting a fixed number of tokens with a single selected model each time.

\textit{Final-text rewrite attacks} instead operate on an already generated watermarked sequence: \\
(1) \texttt{RandomWalk} \citep{liu2024can}: A strategy that repeatedly rewrites spans of the generated text using a weaker unwatermarked model and accepts only quality-preserving variants based on a quality-check oracle.

\textbf{Benchmarks and Metrics.}
We use two evaluation suites: (1) \textbf{Detection}. For \textit{generation-time attacks}, we follow the z-score protocol of \citet{liu2024can} to quantify watermark signal strength. 
For \textit{final-text rewrite attacks}, perturbation detection on the small set of generated tokens is no longer compatible, so we additionally use native sequence detectors \citep{pan2024markllm}. Detection thresholds and confidence categories are reported in Appendix~\ref{app:implementation}. (2) \textbf{Quality and Efficiency}. To assess generation quality and inference speed, we use four benchmarks covering representative reasoning and language tasks: GSM8K (math reasoning) \citep{cobbe2021gsm8k}, MMLU (knowledge) \citep{hendryckstest2021}, SQuAD (reading comprehension) \citep{rajpurkar-etal-2016-squad}, WritingBench (open-ended writing) \citep{wu2025writingbench}.

\begin{table*}[ht]
\centering
\caption{Comparison of Watermark Removal Effectiveness. The table evaluates z-score after various removal attacks. \colorbox{red!55}{\space \space} indicates high-confidence watermark identification, and \colorbox{red!15}{\space \space} indicates low-confidence watermark identification, while no colour indicates no watermark identified.}
\label{tab:removal_cross_model}
\resizebox{0.7\textwidth}{!}{%
\begin{tabular}{ll cccccc}
\toprule
\multirow{2}{*}{\textbf{Base Model}} & \multirow{2}{*}{\textbf{Removal Method}} & \multicolumn{6}{c}{\textbf{Target Watermark (Individual)}} \\
\cmidrule(lr){3-8}
 & & \textbf{AAR} & \textbf{DIPMark} & \textbf{ITS-Edit} & \textbf{KGW} & \textbf{Exp-Edit} & \textbf{Water-Bag} \\
\midrule
\multirow{5}{*}{\textbf{Llama3.1-8B}} 
 & Watermarked Baseline  & \cellcolor{red!55}176.438 & \cellcolor{red!55}58.127 & \cellcolor{red!55}304.022 & \cellcolor{red!55}50.113 & \cellcolor{red!55}61.981 & \cellcolor{red!55}10.237 \\
 & De-mark               & --- & \cellcolor{red!55}99.281 & --- & 1.786 & --- & -10.687 \\
 & ToBlend               & 3.636 & 1.344 & 0.489 & 0.819 & 0.482 & -0.003 \\
 & \textbf{\textsc{WASH} (N=3)}   & 3.699 & 1.477 & 0.384 & 0.634 & -0.245 & -0.109 \\
 & \textbf{\textsc{WASH} (N=4)}   & 3.082 & 0.563 & 0.408 & 0.550 & 0.372 & 0.093 \\
 & \textbf{\textsc{WASH} (N=5)}   & 2.232 & 0.182 & 0.119 & 0.372 & -0.108 & -0.111 \\
\midrule
\multirow{5}{*}{\textbf{Qwen3-8B}} 
 & Watermarked Baseline  & \cellcolor{red!55}15.534 & \cellcolor{red!55}14.210 & \cellcolor{red!55}15.418 & \cellcolor{red!55}34.547 & \cellcolor{red!55}26.904 & \cellcolor{red!15}5.245 \\
 & De-mark               & --- & \cellcolor{red!15}4.560 & --- & 3.770 & --- & -8.452 \\
 & ToBlend               & 1.941 & 0.523 & 0.464 & 0.489 & 0.175 & -0.169 \\
 & \textbf{\textsc{WASH} (N=3)}   & 1.629 & 0.236 & 0.602 & 0.058 & 0.517 & -0.205 \\
 & \textbf{\textsc{WASH} (N=4)}   & 1.286 & 0.094 & 0.068 & 0.130 & 0.200 & 0.000 \\
 & \textbf{\textsc{WASH} (N=5)}   & 0.654 & 0.189 & 0.337 & 0.222 & 0.079 & -2.570 \\
\midrule
\multirow{5}{*}{\textbf{Ministral3-8B}} 
 & Watermarked Baseline  & \cellcolor{red!15}9.175 & \cellcolor{red!55}23.135 & \cellcolor{red!55}27.082 & \cellcolor{red!55}27.407 & \cellcolor{red!55}59.507 & \cellcolor{red!15}9.568 \\
 & De-mark               & --- & \cellcolor{red!55}91.174 & --- & -2.814 & --- & 0.729 \\
 & ToBlend               & 1.204 & 1.556 & 0.711 & 1.168 & 0.623 & 0.437 \\
 & \textbf{\textsc{WASH} (N=3)}   & 1.070 & 0.059 & 0.558 & 1.732 & 0.787 & 0.775 \\
 & \textbf{\textsc{WASH} (N=4)}   & 1.901 & 1.350 & 1.363 & 1.602 & 0.630 & -1.809 \\
 & \textbf{\textsc{WASH} (N=5)}   & 0.421 & 0.141 & 0.456 & 0.461 & 0.435 & -0.382 \\
\bottomrule
\end{tabular}%
}
\end{table*}

\subsection{Watermark Removal Effectiveness}
The experimental results in Figure~\ref{fig:decay} demonstrate the inherent fragility of distributional watermarking when subjected to linear ensemble averaging. Our analysis proceeds in three stages: validating the scaling law of signal decay, isolating the removal mechanism through control experiments, and comparing comprehensive performance against baselines.

\textbf{Signal Decay: From Theory to Reality.}
We first validate the scaling law of signal decay in an idealised setting where models utilise independent watermarks on the same base model (Figure~\ref{fig:decay}(a)). The detection signal diminishes rapidly with $N$, mirroring our theoretical prediction of $O(1/\sqrt{N})$. The pivotal result lies in Figure~\ref{fig:decay}(b), which simulates the realistic user scenario: accessing distinct heterogeneous models (e.g., Llama, Qwen, Ministral) with independent watermarks. Despite the challenge of vocabulary mismatches, our fluency-aware routing successfully neutralises the watermark signal. Notably, the detection strength drops even faster than in the homogeneous setting (reaching near $z=0$ at $N=5$), suggesting that the diversity of base model distributions acts as additional noise that further obscures the watermark trace. We provide the experiment details in Appendix \ref{app:wm_exp}.

\textbf{A Control Experiment: Coordinated Watermark.} 
To verify the mechanism of removal, we evaluate a ``Coordinated Watermark'' scenario where all ensemble models share the same watermark scheme, and signals are coordinated across distinct vocabularies during each token generation. Specifically, we calculate the relative perturbations of the common tokens from one watermarked model and map them to other models using the same relative scale. As illustrated in Figure~\ref{fig:decay}(c), when the ensemble models are synchronised, the washing effect fails. The ensemble average retains a statistically significant z-score well above the detection threshold ($z > 4$) across all schemes. Strong schemes like AAR and Exp-Edit maintain high z-scores of 25.11 and 17.44, respectively. Crucially, the \textit{persistence} of these signals demonstrates that the ensemble process itself does not inherently obliterate watermark information. Instead, the removal efficacy in the independent setting (Figure~\ref{fig:decay}(a,b)) exploits the lack of correlation between providers. This finding highlights a critical defensive insight: robust AI-text detection against ensemble attacks requires \textit{coordinated watermarking strategies} across model providers.

\textbf{Comparison with Generation-time Attacks.}
Table~\ref{tab:removal_cross_model} compares \textsc{WASH} against \textit{generation-time attacks} across various LLMs using the same z-score detection protocol. Individual watermarked models exhibit strong detection signals, with z-scores ranging from 5.2 to 304.0. Prior removal methods show limited efficacy: \texttt{De-mark} performs well only on pure red-green-list watermarking methods with exact logits bias value like KGW and Water-Bag, while collapsing on other distribution modifications (4.6 to 99.3 z-score on DIPMark). It also cannot apply to watermarks that operate on probabilities. In contrast, \textsc{WASH} consistently neutralises detection signals. An ensemble of $N=3$ proves practically sufficient to suppress z-scores below the threshold ($z < 4$) across all cases. Extending to $N=5$ offers a safety margin for aggressive schemes, ensuring near-zero distinguishability. \texttt{ToBlend}, as a model mixture method like ours, could achieve similar removal efficacy to \textsc{WASH} in the detection evaluation, while significantly harming generation quality and efficiency, as we discuss in the next section.

\begin{table}[h!]
\centering
\small
\caption{
Native-detector watermark removal results with final-text rewrite attacks. The results are reported in TPR@5\% FPR calibrated on unwatermarked results, where lower is better.
}
\label{tab:final_text_detection}
\resizebox{0.47\textwidth}{!}{
\begin{tabular}{lccccc}

\toprule
\textbf{Method} & \textbf{DIPMark} & \textbf{KGW} & \textbf{AAR} & \textbf{ITS-Edit} & \textbf{Exp-Edit} \\
\midrule
Watermarked & \cellcolor{red!55} 83.7 & \cellcolor{red!55} 92.8 & \cellcolor{red!55} 95.9 & \cellcolor{red!55} 83.5 & \cellcolor{red!55} 85.9 \\
De-mark & \cellcolor{red!55} 76.5 & \cellcolor{red!15} 58.8 & --- & --- & --- \\
ToBlend & \cellcolor{red!00} 28.2 & \cellcolor{red!00} 38.4 & \cellcolor{red!00} 45.4 & \cellcolor{red!00} 21.4 & \cellcolor{red!00} 11.2 \\
RandomWalk & \cellcolor{red!00} 49.0 & \cellcolor{red!15} 56.7 & \cellcolor{red!00} 24.2 & \cellcolor{red!15} 53.6 & \cellcolor{red!00} 24.0 \\
\textbf{\textsc{WASH} (N=5)} & \cellcolor{red!00} 27.6 & \cellcolor{red!00} 38.9 & \cellcolor{red!00} 42.8 & \cellcolor{red!00} 16.2 & \cellcolor{red!00} 11.3 \\
\bottomrule
\end{tabular}}
\end{table}

\textbf{Comparison with Final-text Rewrite Attacks.}
\textit{Final-text rewrite attacks} require a separate evaluation as they perturb the completed sequence, making token perturbation detectors inapplicable. We thus report the TPR@5\% FPR from native sequence-level detectors. Lower values indicate stronger removal under thresholds calibrated on unwatermarked samples. As shown in Table~\ref{tab:final_text_detection}, \texttt{RandomWalk} leaves several schemes near or within the low-confidence range. \textsc{WASH} keeps all schemes below 43\%, matching \texttt{ToBlend}’s removal ability and outperforming \texttt{RandomWalk} on 4 out of 5 schemes, while retaining quality and efficiency advantages analysed below.

\begin{table*}[ht!]
\centering
\caption{\textbf{Performance and Efficiency Comparison.} We evaluate \colorbox{darkgreen!20}{Blue} generation quality (GSM8K, MMLU, SQuAD) and \colorbox{red!20}{Red} relative computational cost (Time) across the watermarked baseline, removal attacks, and our proposed method with varying ensemble sizes ($n$). Time is normalised to the baseline cost ($1.0\times$).}
\label{tab:performance_time_cost}
\resizebox{0.75\textwidth}{!}{%
\begin{tabular}{l l cr cr cr}
\toprule
\multirow{2}{*}{\textbf{Base Model}} & \multirow{2}{*}{\textbf{Method}} & \multicolumn{2}{c}{\textbf{GSM8K}} & \multicolumn{2}{c}{\textbf{MMLU}} & \multicolumn{2}{c}{\textbf{SQuAD}} \\
\cmidrule(lr){3-4} \cmidrule(lr){5-6} \cmidrule(lr){7-8}
 & & \textbf{Acc.} $\uparrow$ & \textbf{Time} $\downarrow$ & \textbf{Acc.} $\uparrow$ & \textbf{Time} $\downarrow$ & \textbf{F1} $\uparrow$ & \textbf{Time} $\downarrow$ \\
\midrule

\multirow{6}{*}{\textbf{Llama3.1-8B}} 

    & Unwatermarked        & \cellcolor{darkgreen!20}0.567 \footnotesize{$\pm$ 0.012} & \cellcolor{red!0}1.00$\times$ & \cellcolor{darkgreen!15}0.583 \footnotesize{$\pm$ 0.000} & \cellcolor{red!0}1.00$\times$ & \cellcolor{darkgreen!40}0.795 \footnotesize{$\pm$ 0.013} & \cellcolor{red!0}1.00$\times$ \\

    & Watermarked Baseline & \cellcolor{darkgreen!5}0.511 \footnotesize{$\pm$ 0.032} & \cellcolor{red!0}1.00$\times$ & \cellcolor{darkgreen!5}0.560 \footnotesize{$\pm$ 0.023} & \cellcolor{red!0}1.00$\times$ & \cellcolor{darkgreen!20}0.749 \footnotesize{$\pm$ 0.038} & \cellcolor{red!0}1.00$\times$ \\

    & De-mark              & \cellcolor{darkgreen!15}0.550 \footnotesize{$\pm$ 0.040} & \cellcolor{red!50}38.80$\times$ & \cellcolor{darkgreen!5}0.565 \footnotesize{$\pm$ 0.015} & \cellcolor{red!35}10.01$\times$ & \cellcolor{darkgreen!5}0.642 \footnotesize{$\pm$ 0.071} & \cellcolor{red!50}32.50$\times$ \\

    & ToBlend              & \cellcolor{darkgreen!20}0.568 \footnotesize{$\pm$ 0.087} & \cellcolor{red!50}$14.60\times$ & \cellcolor{darkgreen!25}0.631 \footnotesize{$\pm$ 0.038} & \cellcolor{red!0}1.06$\times$ & \cellcolor{darkgreen!2}0.633 \footnotesize{$\pm$ 0.045} & \cellcolor{red!50}$17.55\times$ \\

    & \textbf{\textsc{WASH} (N=3)}  & \cellcolor{darkgreen!40}0.695 \footnotesize{$\pm$ 0.067} & \cellcolor{red!10}2.19$\times$ & \cellcolor{darkgreen!25}0.631 \footnotesize{$\pm$ 0.038} & \cellcolor{red!0}1.06$\times$ & \cellcolor{darkgreen!30}0.764 \footnotesize{$\pm$ 0.031} & \cellcolor{red!10}1.49$\times$ \\

    & \textbf{\textsc{WASH} (N=4)}  & \cellcolor{darkgreen!40}0.701 \footnotesize{$\pm$ 0.074} & \cellcolor{red!10}2.37$\times$ & \cellcolor{darkgreen!40}0.644 \footnotesize{$\pm$ 0.032} & \cellcolor{red!0}1.06$\times$ & \cellcolor{darkgreen!30}0.765 \footnotesize{$\pm$ 0.028} & \cellcolor{red!10}1.53$\times$ \\

    & \textbf{\textsc{WASH} (N=5)}  & \cellcolor{darkgreen!40}0.698 \footnotesize{$\pm$ 0.057} & \cellcolor{red!10}2.28$\times$ & \cellcolor{darkgreen!40}0.646 \footnotesize{$\pm$ 0.025} & \cellcolor{red!0}1.06$\times$ & \cellcolor{darkgreen!30}0.764 \footnotesize{$\pm$ 0.031} & \cellcolor{red!10}1.54$\times$ \\

\midrule

\multirow{6}{*}{\textbf{Qwen3-8B}} 

    & Unwatermarked        & \cellcolor{darkgreen!15}0.713 \footnotesize{$\pm$ 0.021} & \cellcolor{red!0}1.00$\times$ & \cellcolor{darkgreen!10}0.726 \footnotesize{$\pm$ 0.000} & \cellcolor{red!0}1.00$\times$ & \cellcolor{darkgreen!10}0.421 \footnotesize{$\pm$ 0.004} & \cellcolor{red!0}1.00$\times$ \\

    & Watermarked Baseline & \cellcolor{darkgreen!5}0.693 \footnotesize{$\pm$ 0.051} & \cellcolor{red!0}1.00$\times$ & \cellcolor{darkgreen!5}0.723 \footnotesize{$\pm$ 0.002} & \cellcolor{red!0}1.00$\times$ & \cellcolor{darkgreen!5}0.396 \footnotesize{$\pm$ 0.014} & \cellcolor{red!0}1.00$\times$ \\

    & De-mark              & \cellcolor{darkgreen!25}0.755 \footnotesize{$\pm$ 0.035} & \cellcolor{red!50}30.97$\times$ & \cellcolor{darkgreen!5}0.724 \footnotesize{$\pm$ 0.002} & \cellcolor{red!20}9.47$\times$ & \cellcolor{darkgreen!10}0.434 \footnotesize{$\pm$ 0.012} & \cellcolor{red!50}30.98$\times$ \\
    
    & ToBlend              & \cellcolor{darkgreen!5}0.701 \footnotesize{$\pm$ 0.050} & \cellcolor{red!50}$11.40\times$ & \cellcolor{darkgreen!5}0.722 \footnotesize{$\pm$ 0.011} & \cellcolor{red!0}1.01$\times$ & \cellcolor{darkgreen!15}0.474 \footnotesize{$\pm$ 0.071} & \cellcolor{red!50}$19.18\times$ \\

    & \textbf{\textsc{WASH} (N=3)}  & \cellcolor{darkgreen!40}0.804 \footnotesize{$\pm$ 0.050} & \cellcolor{red!10}1.53$\times$ & \cellcolor{darkgreen!5}0.722 \footnotesize{$\pm$ 0.011} & \cellcolor{red!0}1.01$\times$ & \cellcolor{darkgreen!35}0.610 \footnotesize{$\pm$ 0.084} & \cellcolor{red!10}1.69$\times$ \\

    & \textbf{\textsc{WASH} (N=4)}  & \cellcolor{darkgreen!40}0.809 \footnotesize{$\pm$ 0.047} & \cellcolor{red!10}1.58$\times$ & \cellcolor{darkgreen!5}0.723 \footnotesize{$\pm$ 0.009} & \cellcolor{red!0}1.01$\times$ & \cellcolor{darkgreen!40}0.629 \footnotesize{$\pm$ 0.067} & \cellcolor{red!10}1.82$\times$ \\

    & \textbf{\textsc{WASH} (N=5)}  & \cellcolor{darkgreen!40}0.808 \footnotesize{$\pm$ 0.035} & \cellcolor{red!10}1.60$\times$ & \cellcolor{darkgreen!5}0.723 \footnotesize{$\pm$ 0.010} & \cellcolor{red!0}1.01$\times$ & \cellcolor{darkgreen!35}0.612 \footnotesize{$\pm$ 0.071} & \cellcolor{red!10}1.96$\times$ \\

\midrule

\multirow{6}{*}{\textbf{Ministral3-8B}} 

    & Unwatermarked        & \cellcolor{darkgreen!40}0.837 \footnotesize{$\pm$ 0.025} & \cellcolor{red!0}1.00$\times$ & \cellcolor{darkgreen!40}0.731 \footnotesize{$\pm$ 0.000} & \cellcolor{red!0}1.00$\times$ & \cellcolor{darkgreen!40}0.839 \footnotesize{$\pm$ 0.000} & \cellcolor{red!0}1.00$\times$ \\

    & Watermarked Baseline & \cellcolor{darkgreen!15}0.809 \footnotesize{$\pm$ 0.041} & \cellcolor{red!0}1.00$\times$ & \cellcolor{darkgreen!5}0.716 \footnotesize{$\pm$ 0.027} & \cellcolor{red!0}1.00$\times$ & \cellcolor{darkgreen!10}0.795 \footnotesize{$\pm$ 0.032} & \cellcolor{red!0}1.00$\times$ \\

    & De-mark              & \cellcolor{darkgreen!15}0.805 \footnotesize{$\pm$ 0.005} & \cellcolor{red!50}36.39$\times$ & \cellcolor{darkgreen!10}0.717 \footnotesize{$\pm$ 0.015} & \cellcolor{red!35}10.02$\times$ & \cellcolor{darkgreen!35}0.832 \footnotesize{$\pm$ 0.004} & \cellcolor{red!50}36.42$\times$ \\
    
    & ToBlend              & \cellcolor{darkgreen!5}0.715 \footnotesize{$\pm$ 0.036} & \cellcolor{red!50}$12.17\times$ & \cellcolor{darkgreen!5}0.716 \footnotesize{$\pm$ 0.015} & \cellcolor{red!0}1.04$\times$ & \cellcolor{darkgreen!5}0.775 \footnotesize{$\pm$ 0.016} & \cellcolor{red!50}$15.92\times$ \\

    & \textbf{\textsc{WASH} (N=3)}  & \cellcolor{darkgreen!30}0.815 \footnotesize{$\pm$ 0.030} & \cellcolor{red!10}1.25$\times$ & \cellcolor{darkgreen!5}0.716 \footnotesize{$\pm$ 0.015} & \cellcolor{red!0}1.04$\times$ & \cellcolor{darkgreen!30}0.805 \footnotesize{$\pm$ 0.026} & \cellcolor{red!10}1.19$\times$ \\

    & \textbf{\textsc{WASH} (N=4)}  & \cellcolor{darkgreen!35}0.821 \footnotesize{$\pm$ 0.026} & \cellcolor{red!10}1.30$\times$ & \cellcolor{darkgreen!15}0.718 \footnotesize{$\pm$ 0.007} & \cellcolor{red!0}1.03$\times$ & \cellcolor{darkgreen!30}0.807 \footnotesize{$\pm$ 0.020} & \cellcolor{red!10}1.30$\times$ \\

    & \textbf{\textsc{WASH} (N=5)}  & \cellcolor{darkgreen!30}0.818 \footnotesize{$\pm$ 0.012} & \cellcolor{red!10}1.28$\times$ & \cellcolor{darkgreen!15}0.718 \footnotesize{$\pm$ 0.005} & \cellcolor{red!0}1.03$\times$ & \cellcolor{darkgreen!35}0.821 \footnotesize{$\pm$ 0.029} & \cellcolor{red!10}1.30$\times$ \\

\bottomrule
\end{tabular}%
}
\end{table*}

\subsection{Quality, Semantic Integrity, and Efficiency}

\textbf{Comparison with Generation-time Attacks.}
As shown in Table~\ref{tab:performance_time_cost}, watermarking inevitably distorts the optimal output distribution, leading to up to 10\% performance degradation relative to the unwatermarked baseline. While removal attacks aim to recover this lost utility by restoring the original distribution, prior methods face two bottlenecks: (1) excessive perturbation may further harm generation quality, and (2) the computational cost of recovery can be prohibitive. 

Consistent with our theoretical analysis, \textsc{WASH} resolves these tensions by recovering the unwatermarked distribution through ensembling. It achieves generation quality comparable to or superior to the strongest baseline (\texttt{De-mark}) in 7 out of 9 benchmark settings, while outperforming the other mixture method (\texttt{ToBlend}) across all settings.

The efficiency advantage is decisive for generation-intensive tasks. While prior overhead may be tolerable for short discriminative tasks such as MMLU, it becomes prohibitive for long-form generation, where watermarking is most relevant for content provenance, copyright protection, and misuse attribution. On GSM8K and SQuAD, \texttt{De-mark}
suffers severe latency ($>30\times$) due to iterative distribution probing, and \texttt{ToBlend} incurs similarly high overhead ($\sim12\times$) because it repeatedly reprocesses context and predicts redundant future tokens. In contrast, \textsc{WASH} operates at only $1.0\times$--$2.4\times$ the baseline cost through deterministic sampling and synchronised KV caching, giving roughly a 6$\times$ speedup over competing methods while maintaining stronger quality.

\textbf{Comparison with Final-text Rewrite Attacks.}
We evaluate \texttt{RandomWalk} on GSM8K and WritingBench rather than MMLU and SQuAD because final-text rewriting requires sufficiently long generations to meaningfully change surface form while preserving content. MMLU multiple-choice outputs and SQuAD short answers are often too short to expose either rewriting effectiveness or its utility cost.

\begin{table}[h!]
\centering
\small
\caption{
Comparison with final-text rewriting on GSM8K and WritingBench. Runtime is normalised to the watermarked baseline.}
\label{tab:paraphrase_quality}
\resizebox{0.4\textwidth}{!}{
\begin{tabular}{l cr cr}

\toprule
\multirow{2}{*}{\textbf{Method}} & 
\multicolumn{2}{c}{\textbf{GSM8K}} & \multicolumn{2}{c}{\textbf{WritingBench}} \\
\cmidrule(lr){2-3} \cmidrule(lr){4-5}
 & \textbf{Acc.} $\uparrow$ & \textbf{Time} $\downarrow$ & \textbf{Score} $\uparrow$ & \textbf{Time} $\downarrow$ \\
\midrule

Watermarked & 0.511 & 1.00 $\times$ & 4.10 & 1.00 $\times$ \\
De-mark & 0.550 & \cellcolor{red!50} 38.46 $\times$ & 4.04 & \cellcolor{red!50}43.97 $\times$ \\
ToBlend & 0.568 & \cellcolor{red!50}13.28 $\times$ & 2.32 & \cellcolor{red!25}7.89 $\times$ \\
RandomWalk & 0.467 & \cellcolor{red!20}4.39 $\times$ & 3.61 & \cellcolor{red!35}10.03 $\times$ \\
\textbf{\textsc{WASH} (N=5)} & \textbf{0.698} & \cellcolor{red!10}2.28 $\times$ & \textbf{4.26} & \cellcolor{red!10}1.85 $\times$ \\
\bottomrule
\end{tabular}
}
\end{table}

As shown in Table~\ref{tab:paraphrase_quality}, \texttt{RandomWalk} underperforms the watermarked baseline on both reasoning and open-ended writing, likely because in-place rewriting with another model introduces instability. Its additional rewriting phase also incurs substantial overhead, reaching $4\times$ runtime on GSM8K and $10\times$ on WritingBench. In contrast, \textsc{WASH} improves quality while requiring only about $2\times$ runtime. Thus, final-text rewriting may weaken detection, but it pays for this through lower utility and significantly higher latency.

\textbf{Resource and Serving Trade-offs.}
Table~\ref{tab:resource} further evaluates the system's cost of \textsc{WASH}. The parallel implementation keeps all specialists resident and synchronises their KV caches, prioritising latency. The sequential implementation loads specialists on demand, reducing peak memory at the cost of higher token latency. This gives \textsc{WASH} a practical deployment trade-off: \textsc{WASH}-Par. is substantially faster than \texttt{De-mark} and \texttt{ToBlend}, while \textsc{WASH}-Seq. reduces peak memory close to the single-model baseline. Thus, \textsc{WASH} remains practical under both latency-constrained and memory-constrained serving regimes.

\begin{table}[h!]
\centering
\small
\caption{Resource usage and decoding latency. Peak memory is measured during generation, and token latency is averaged over generated tokens. WASH-Par. keeps all specialists resident, whereas WASH-Seq. offloads inactive specialists dynamically.}
\label{tab:resource}
\resizebox{0.46\textwidth}{!}{
\begin{tabular}{lcccc}
\toprule
\textbf{Method} 
& \textbf{Peak Mem} 
& \textbf{Rel.} $\downarrow$ 
& \textbf{Token Latency} 
& \textbf{Rel.} $\downarrow$\\
\midrule
Unwatermarked & 15.09 GB & 1.00$\times$ & 29.5 ms & 0.98$\times$\\
Watermarked & 15.10 GB & 1.00$\times$ & 30.0 ms & 1.00$\times$ \\
\midrule
De-mark & 16.58 GB & 1.10$\times$ & 190.6 ms & 6.35$\times$ \\
ToBlend & 71.46 GB & 4.73$\times$ & 172.3 ms & 5.74$\times$ \\
\textbf{\textsc{WASH}-Par.} & 40.87 GB & 2.71$\times$ & \textbf{56.9 ms} & \textbf{1.90}$\times$ \\
\textbf{\textsc{WASH}-Seq.} & \textbf{15.76 GB} & \textbf{1.04}$\times$ & 165.5 ms & 5.52$\times$ \\
\bottomrule
\end{tabular}}
\vspace{-0.5em}
\end{table}

\subsection{Ablation and Analysis of Fluency-Aware Routing}

\textbf{Routing Ablation.}
Fluency-aware routing is the main difference between \textsc{WASH} and naive distribution averaging. Table~\ref{tab:ablation_quality} evaluates this design choice. \texttt{Naive Avg} removes routing and directly averages distributions. The \texttt{Rewrite} variants then attempt to repair the \texttt{Naive Avg} output using an additional watermarked or unwatermarked copy of the same base model, testing whether post-hoc rewriting can replace generation-time routing.

\begin{table}[h!]
\centering
\small
\caption{
Ablation of fluency-aware routing.
}
\label{tab:ablation_quality}
\resizebox{0.47\textwidth}{!}{
\begin{tabular}{lccc}
\toprule
\textbf{Method} & \textbf{GSM8K} $\uparrow$ & \textbf{WritingBench} $\uparrow$ & \textbf{Detect} TPR@5\% FPR $\downarrow$\\
\midrule
Naive Avg & 0.339 & 3.95 & \cellcolor{red!00} 41.3 \\
+ Rewrite (WM) & 0.197 & 4.02 & \cellcolor{red!55} 76.8 \\
+ Rewrite (Un-WM) & 0.205 & 4.05 & \cellcolor{red!00} 10.9 \\
\textbf{\textsc{WASH} (N=5)} & \textbf{0.698} & \textbf{4.26} & \cellcolor{red!00} 33.3 \\
\bottomrule
\end{tabular}}
\vspace{0.5em}
\end{table}

The results show that post-hoc repair is not an effective substitute for routing. \texttt{Naive Avg} suppresses detection, but substantially degrades GSM8K accuracy. \texttt{Rewrite} improves surface-level writing quality, yet it further hurts reasoning accuracy. \textsc{WASH} instead performs sparse local routing during generation, preserving both low detection and substantially stronger utility.

\textbf{Routing Robustness.}
We further test whether fluency-aware routing could reintroduce watermark signal when specialised vocabulary is frequent. This concern is most relevant in domains such as medicine and law, where rare terminology might trigger long single-specialist spans,
weakening the averaging effect of \textsc{WASH}. Table~\ref{tab:domain_specific_routing_stats} shows that this does not occur. The routed tokens account for less than 3.2\% of generation; routed spans are short; and final detection scores remain far below the watermark threshold. Manual inspection suggests that complex technical terms are often decomposed into shared subword units, while routing is more frequently triggered by ordinary lexical units with tokeniser mismatches. Thus, fluency-aware routing acts as a sparse local repair mechanism rather than sustained single-model generation.

\begin{table}[h!]
\centering
\small
\caption{
Routing statistics under specialised vocabulary. Medical and legal MMLU subsets are converted into free-form reasoning prompts. We report routing frequency, routed-span length, and final detection score.
}
\label{tab:domain_specific_routing_stats}
\resizebox{0.47\textwidth}{!}{
\begin{tabular}{lccc}
\toprule
\textbf{Domain} 
& \textbf{Routed Token Frac.}
& \textbf{Avg. Routed Len}
& \textbf{Detection $z$} $\downarrow$\\
\midrule
Medical & 3.2\% & 3.7 & 0.84\\
Legal & 2.7\% & 3.6 & 1.08 \\
\bottomrule
\end{tabular}}
\vspace{0.5em}
\end{table}

\section{Related Works}

\textbf{Providing and Detecting Watermarks}. Recent research has established LLM watermarking through green/red lists.  \cite{kirchenbauer2023watermark}. While effective, this approach struggles with low-entropy text and is prone to Type II errors. To mitigate these limitations, subsequent research has focused on distribution-preserving and resilient schemes. For instance, \citet{kuditipudi2023robust} and \citet{wu2023resilient} propose distortion-free methods to maintain generation quality. Similarly, \citet{hu2023unbiased} introduce unbiased watermarking via reweighting mechanisms and log-likelihood tests; however, this method assumes knowledge of the generation process and often lacks adversarial robustness.
Further advancements explore alternative mechanisms on modifying generation bias through a semantic-aware watermarking \citep{guo2024context} or maximal coupling strategies \citep{xie2025debiasing}, while others aim to tackle detection challenges in low-entropy scenarios \cite{mao2024watermarking} or improve scalability \cite{dathathri2024scalable}. 

\textbf{Watermark Removal Attacks}. Early attack approaches use paraphrasing to preserve semantics while resampling expressions \citep{Krishna2023Paraphrasing}, or sampling from multiple keys to perform majority voting \citep{Pang2024AttackingLLMWatermarks}. \textsc{RandomWalk} further rewrites local spans and accepts quality-preserving candidates through an external quality check \citep{liu2024can}. However, these methods typically rely on detector-available scenarios, assuming the adversary has access to an oracle verifier. 
Recent research attempts to eliminate this dependence. Many theoretical attacks necessitate query-intensive strategies to reverse-engineer watermarking rules. Methods such as \textsc{Watermark Stealing} \citep{jovanovic2024watermark}, \textsc{SCTS} \citep{wu2024bypassing}, and \textsc{De-Mark} \citep{chen2024mark} require a high volume of specific prompt queries or iterative token-level probing to infer red-green lists. Similarly, optimisation-based attacks employ computationally expensive random walks \citep{zhang2023watermarks}. While theoretically effective, these strategies are impractical for real-time or long-form generation tasks due to prohibitive latency and computational costs. Additionally, \textsc{ToBlend} attempts to bypass watermarks by alternating between models \citep{huang2024toblend}, though often at the cost of coherence or inference efficiency due to frequent context re-encoding.

\section{Conclusion}
We demonstrate that current distributional watermarking schemes are structurally vulnerable to linear ensembling. Theoretically, we prove that averaging outputs from independent models cancels watermark perturbations. To address vocabulary mismatches in heterogeneous ensembles, we introduce WASH, which utilises fluency-aware routing to enable effective probability aggregation. Empirically, WASH renders watermarks statistically undetectable ($z < 2$) with as few as three models, while preserving generation quality with practical inference efficiency. Our findings suggest that reliable detection in a competitive marketplace is unattainable without industry-wide standardisation of watermark keys among model providers.

\section*{Impact Statement}

This paper presents work whose goal is to advance the field of AI-generated content detection. This is increasingly important for maintaining societal trust in digital information and content authenticity, and for protecting intellectual property. We demonstrate that various contemporary LLM watermarking schemes are fragile under a general linear ensembling attack. While this attack exposes current limitations, it also serves as an alert to model providers and researchers to strengthen watermarking against adaptive and cross-model threats.

Moreover, we have conducted a coordination experiment, revealing that even simple signal coordination across heterogeneous models can partially mitigate the effectiveness of the ensemble attack. These findings suggest that true watermarking robustness depends not only on isolated model-level defences, but also on cooperative ways that may be more effective. In general, we underscore the value of cross-provider collaboration in developing robust watermarking provenance mechanisms for AI-generated content.

\section*{Acknowledgments}
This work was supported in part by the UK Engineering and Physical Sciences Research Council (EPSRC) through a Turing AI Fellowship (grant no. EP/V020579/1, EP/V020579/2). We thank Xiaojia Rao for participating in the initial discussions that helped shape the idea of this work.

\bibliography{example_paper}
\bibliographystyle{icml2026}

%%%%%%%%%%%%%%%%%%%%%%%%%%%%%%%%%%%%%%%%%%%%%%%%%%%%%%%%%%%%%%%%%%%%%%%%%%%%%%%
%%%%%%%%%%%%%%%%%%%%%%%%%%%%%%%%%%%%%%%%%%%%%%%%%%%%%%%%%%%%%%%%%%%%%%%%%%%%%%%
% APPENDIX
%%%%%%%%%%%%%%%%%%%%%%%%%%%%%%%%%%%%%%%%%%%%%%%%%%%%%%%%%%%%%%%%%%%%%%%%%%%%%%%
%%%%%%%%%%%%%%%%%%%%%%%%%%%%%%%%%%%%%%%%%%%%%%%%%%%%%%%%%%%%%%%%%%%%%%%%%%%%%%%
\newpage
\appendix
\onecolumn

\section{Proof of the Main Theorem}
\label{app:convergence}
\begin{theorem}[Convergence to Consensus Distribution] 
Under Assumption \ref{assump:unbiased}, for any fixed context $x$, let $\bar{p}_N(\cdot|x) = \frac{1}{N}\sum_{i=1}^N p_i(\cdot|x)$ be the aggregated distribution. For any $\delta > 0$, with probability at least $1-\delta$, the $\ell_\infty$ distance between the aggregated distribution and the consensus distribution $p^*(\cdot|x)$ satisfies:
\[
\bigl\| \bar{p}_N(\cdot|x) - p^*(\cdot|x) \bigr\|_{\mathrm{\infty}} \lesssim \sqrt{\frac{\log(|\mathcal{V}|/\delta)}{N}} + \eta^2,
\]
where $|\mathcal{V}|$ denotes the vocabulary size, and $\eta^2$ is the upper bound on the expected weighted variance of the perturbation as in~\eqref{eq:expected_var_bound}. 
\end{theorem}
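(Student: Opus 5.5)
We split the error by the triangle inequality into a stochastic part and a bias part:
\[
\|\bar p_N - p^*\|_\infty \le \|\bar p_N - \mathbb{E}[\bar p_N]\|_\infty + \|\mathbb{E}[\bar p_N] - p^*\|_\infty .
\]
Throughout, the context $x$ is fixed and suppressed from notation. By Assumption~\ref{assump:unbiased}(b) the $p_i$ are independent, though not necessarily identically distributed. We therefore write $\mathbb{E}[\bar p_N] = \frac1N\sum_i \mathbb{E}[p_i]$ and bound each $\|\mathbb{E}[p_i]-p^*\|_\infty$ uniformly in $i$.

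\textbf{Stochastic term.} For each fixed $v\in\mathcal{V}$, the quantities $p_i(v)\in[0,1]$ are independent across $i$. Hoeffding's inequality gives
\[
\Pr\bigl(|\bar p_N(v)-\mathbb{E}\bar p_N(v)|>t\bigr)\le 2e^{-2Nt^2}.
\]
A union bound over the $|\mathcal{V}|$ coordinates, with $t=\sqrt{\log(2|\mathcal{V}|/\delta)/(2N)}$, yields the first term with probability at least $1-\delta$.

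\textbf{Bias term.} This step is deterministic per provider. Fix $i$, write $\delta=\delta_i$, and let $\mu=\sum_u p^*(u)\delta(u)$. Softmax is shift invariant, so we may replace $\delta$ by the centred perturbation $\tilde\delta=\delta-\mu$, which satisfies $\|\tilde\delta\|_\infty\le 2\xi\le 2$, $\sum_u p^*(u)\tilde\delta(u)=0$, and $\sum_u p^*(u)\tilde\delta(u)^2=\mathrm{Var}_{p^*}(\delta)$. Then
\[
p_i(v)=\frac{p^*(v)e^{\tilde\delta(v)}}{\tilde Z},\qquad \tilde Z=\sum_u p^*(u)e^{\tilde\delta(u)}.
\]
On the bounded range $|s|\le 2$ we have $e^s=1+s+R(s)$ with $0\le R(s)\le C s^2$. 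This gives $\tilde Z = 1+\sum_u p^*(u)R(\tilde\delta(u))$, hence $1\le \tilde Z\le 1+C\,\mathrm{Var}_{p^*}(\delta)$.

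Next we expand
\[
p_i(v)-p^*(v)=p^*(v)\frac{e^{\tilde\delta(v)}-\tilde Z}{\tilde Z}
= p^*(v)\bigl(\tilde\delta(v)+R(\tilde\delta(v))-(\tilde Z-1)\bigr)/\tilde Z .
\]
The terms $p^*(v)R(\tilde\delta(v))$ and $p^*(v)(\tilde Z-1)$ are each bounded in absolute value by $C\,\mathrm{Var}_{p^*}(\delta)$. For the first, note that $p^*(v)\tilde\delta(v)^2\le\sum_u p^*(u)\tilde\delta(u)^2$.

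It remains to show that the linear term $p^*(v)\tilde\delta(v)/\tilde Z$ has expectation of second order. We use Assumption~\ref{assump:unbiased}(c) in the form $\mathbb{E}[\tilde\delta(v)]=\mathbb{E}\delta(v)-\sum_u p^*(u)\mathbb{E}\delta(u)=0$, and write $1/\tilde Z=1-(\tilde Z-1)/\tilde Z$. This gives
\[
\bigl|\mathbb{E}[p^*(v)\tilde\delta(v)/\tilde Z]\bigr|\le \mathbb{E}\bigl[p^*(v)|\tilde\delta(v)|\,(\tilde Z-1)\bigr]\le 2C\,\mathbb{E}[\mathrm{Var}_{p^*}(\delta)].
\]
Taking expectations of all pieces and invoking Assumption~\ref{assump:unbiased}(d) yields $|\mathbb{E}p_i(v)-p^*(v)|\lesssim\eta^2$ uniformly in $v$ and $i$. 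Averaging over $i$ bounds the bias term by $O(\eta^2)$, and combining with the stochastic term completes the argument.

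\textbf{Main obstacle.} The delicate point is the linear term. The naive estimate $p^*(v)|\tilde\delta(v)|$ is only first order, so it must be controlled through the zero-mean assumption combined with the multiplicative factor $(\tilde Z-1)$, which is itself second order. Centring by $\mu$ is what makes $\tilde Z-1$ second order, and this is exactly where shift invariance of the softmax is used. The constants depend only on $\xi\le1$, which the $\lesssim$ absorbs.
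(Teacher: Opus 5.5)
Your proposal is correct and follows essentially the same route as the paper's proof. Both arguments use the triangle-inequality split into a Hoeffding-plus-union-bound concentration term and a bias term, centre $\delta_i$ by $\sum_u p^*(u)\delta_i(u)$ via shift invariance so that $Z_i - 1$ is controlled by $\mathrm{Var}_{u\sim p^*}(\delta_i(u))$, and apply a second-order expansion in which the zero-mean assumption kills the linear term. The difference is only technical: the paper sandwiches $p_i(v)-p^*(v)$ between $p^*(v)\bigl(\delta_i(v)-2\mathrm{Var}_{u\sim p^*}(\delta_i(u))\bigr)$ and $p^*(v)\bigl(\delta_i(v)+\delta_i^2(v)\bigr)$ so the linear term appears without the normaliser, while you use an exact remainder identity and peel off the $1/\tilde Z$ factor explicitly, which lets you work with a generic constant on the full post-centring range $|\tilde\delta|\le 2\xi\le 2$ instead of inequalities stated only for $|y|\le 1$.
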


\begin{proof}
For brevity, we omit the dependency on $x$ in the notation throughout this proof (e.g., $p^*(v)$ instead of $p^*(v|x)$). The perturbed distribution for model $i$ is given by:
\[
p_i(v) = \frac{p^*(v) \exp(\delta_i(v))}{Z_i}, \quad \text{where } \, Z_i = \sum_{u \in \mathcal{V}} p^*(u) \exp(\delta_i(u)).
\]
\paragraph{Step 1: Shift invariance and centring.}

We perform a centring operation on $\delta_i$. Due to the shift invariance property of the softmax function, replacing $\delta_i(v)$ with $\delta'_i(v) = \delta_i(v) - C_i$ does not change $p_i(v)$. In particular, by choosing $C_i = \sum_{u} p^*(u)\delta_i(u)$, we have $\sum_{u} p^*(u)\delta'_i(u) = 0$. It remains to check that the shifted version $\delta'_i(v)$ still satisfies the assumptions (up to some constants).
\begin{itemize}
    \item Bounded magnitude: $|\delta'_i(v)|  \leq |\delta_i(v)| + \sum_{u} p^*(u) |\delta_i(v)| \leq 2 \xi$;
    \item Zero mean: $\mathbb{E}[\delta'_i(v)] = \mathbb{E}[\delta_i(v)] - \sum_u p^*(u)\mathbb{E}[\delta_i(u)] = 0$;
    \item Variance over vocabulary remains unchanged:
    \[ \mathrm{Var}_{u \sim p^*}(\delta'_i(u)) = \sum_u p^*(u) (\delta'_i(u))^2  =  \sum_u p^*(u) \Bigl(\delta_i(u) - \sum_{v} p^*(v)\delta_i(v)\Bigr)^2 = \mathrm{Var}_{u \sim p^*}(\delta_i(u)).
    \] 
\end{itemize}
Therefore, for the remainder of the proof, we assume, without loss of generality, that $\delta_i$ satisfies the centring property:
\begin{equation} \label{eq:centering_prop}
    \mathbb{E}_{u \sim p^*} [\delta_i(u)] = \sum_{u \in \mathcal{V}} p^*(u)\delta_i(u) = 0.
\end{equation}
Fix a $v \in \mathcal{V}$, using triangle inequality, we can write
\begin{equation} \label{Eq:difference_master}
\left| \bar{p}_N(v) - p^*(v) \right| 
\leq \left| \bar{p}_N(v) - \frac{1}{N}\sum_{i=1}^{N}\mathbb{E}[p_i(v)] \right| 
+ \left| \frac{1}{N}\sum_{i=1}^{N}\mathbb{E}[p_i(v)] - p^*(v) \right|.
\end{equation}
\paragraph{Step 2: Concentration around the mean.}
We first bound the first term in~\eqref{Eq:difference_master}. Note that $p_i(v) \in [0,1]$ is bounded for $i \in \mathbb{N}$. As $\delta_1, \delta_2, \ldots$ are independent random vectors according to Assumption \ref{assump:unbiased}(b), then $p_1(v), p_2(v), \ldots$ are also independent and bounded. By Hoeffding's inequality \cite{hoeffding1963probability}, we have
\begin{equation} \label{Eq:mean_concentration}
P\left(\left| \bar{p}_N(v) - \frac{1}{N}\sum_{i=1}^{N}\mathbb{E}[p_i(v)] \right| \geq t\right) \leq 2\exp\left(-2Nt^2\right).
\end{equation}
We study the bias term in~\eqref{Eq:difference_master} in the next steps.
\paragraph{Step 3: Upper and lower bounds on $p_i(v)$.}

Given Assumption \ref{assump:unbiased}(a), $|\delta_i(v)| \le \xi \ll 1$. We use standard inequalities $1+y \le e^y \le 1+y+y^2$, valid for $|y| \leq 1$. First, we bound the normalisation quantity $Z_i$:
\begin{align*}
    Z_i &\ge \sum_{u} p^*(u)(1+\delta_i(u)) = 1 + \underbrace{\sum_{u} p^*(u)\delta_i(u)}_{\text{0 by~\eqref{eq:centering_prop}}} = 1, \quad \text{and}\\
    Z_i &\le \sum_{u} p^*(u)(1+\delta_i(u)+\delta_i^2(u)) = 1 + 0 + \underbrace{\sum_{u} p^*(u)\delta_i^2(u)}_{\mathrm{Var}_{u \sim p^*}(\delta_i(u))} = 1 + \sigma_i^2.
\end{align*}
Next, we derive bounds for $p_i(v)$. First, we upper bound it by
\begin{equation*}
p_i(v) = \frac{p^*(v) e^{\delta_i(v)}}{Z_i} \le \frac{p^*(v)(1+\delta_i(v)+\delta_i^2(v))}{1} = p^*(v) \left( 1 + \delta_i(v) + \delta_i^2(v) \right).
\end{equation*}
Using the inequality $\frac{1}{1+x} \ge 1-x$ for $x \in \mathbb{R}$, we lower bound  $p_i(v)$ by
\begin{align*}
    p_i(v) &\ge \frac{p^*(v)(1+\delta_i(v))}{1+\mathrm{Var}_{u \sim p^*}(\delta_i(u))} \ge p^*(v)(1+\delta_i(v))(1-\mathrm{Var}_{u \sim p^*}(\delta_i(u))) \notag \\
    &= p^*(v) \bigl( 1 + \delta_i(v) -\mathrm{Var}_{u \sim p^*}(\delta_i(u))- \delta_i(v)\mathrm{Var}_{u \sim p^*}(\delta_i(u)) \bigr) \geq p^*(v) \bigl( 1 + \delta_i(v) - 2\mathrm{Var}_{u \sim p^*}(\delta_i(u)) \bigr).
\end{align*}
Putting things together, we have, 
\begin{equation} \label{eq:point_diff_bounds}
    p^*(v)\bigl(\delta_i(v) - 2\mathrm{Var}_{u \sim p^*}(\delta_i(u)) \bigr) \le p_i(v) - p^*(v) \le p^*(v)\bigl(\delta_i(v) + \delta_i^2(v)\bigr).
\end{equation}

\paragraph{Step 4: Bounds on the bias term.}
Taking the expectation on all sides of~\eqref{eq:point_diff_bounds}, we have, by Assumption \ref{assump:unbiased}(c), that
\[
p^*(v)\underbrace{\mathbb{E}[\delta_i(v)]}_{0} - 2 p^*(v) \mathbb{E}\bigl[\mathrm{Var}_{u \sim p^*}(\delta_i(u))\bigr] \leq \mathbb{E}[p_i(v)] - p^*(v) \leq p^*(v)\underbrace{\mathbb{E}[\delta_i(v)]}_{0}  + p^*(v) \mathbb{E}[\delta^2_i(v)].
\]
Using Assumption~\ref{assump:unbiased}(d), we have 
\[
\sup_{v\in \mathcal{V}} \ p^*(v) \mathbb{E}\bigl[\mathrm{Var}_{u \sim p^*}(\delta_i(u))\bigr] \leq  \eta^2 \quad \text{and} \quad \sup_{v\in \mathcal{V}} \  p^*(v) \mathbb{E}[\delta^2_i(v)] \leq \mathbb{E}\biggl[\sum_u p^*(u)\delta^2_i(u) \biggr] = \eta^2.
\]
Thus 
\begin{equation} \label{eq:bias_total_control}
    \sup_{v \in \mathcal{V}}  \bigl| \mathbb{E}[p_i(v)] - p^*(v) \bigr| \leq 2\eta^2.
\end{equation}

\paragraph{Step 5: Final derivation.}
Combining~\eqref{Eq:mean_concentration} and~\eqref{eq:bias_total_control}, we conclude that for any $\delta > 0$, with probability at least $1-\delta$,
\[
\bigl|\bar{p}_N(v) - p^*(v) \bigr| \leq \sqrt{ \frac{2\log(2/\delta)}{N}} + 2\eta^2.
\]
Note that this holds for any fixed $v \in \mathcal{V}$. For the $\ell_\infty$ distance, by a standard union bound argument, with probability at least $1-\delta$,
\begin{align*}
    \bigl\| \bar{p}_N(\cdot|x) - p^*(\cdot|x) \bigr\|_{\infty} &= \sup_{v \in \mathcal{V}} \bigl|\bar{p}_N(v) - p^*(v) \bigr| \\
    &\leq \sup_{v \in \mathcal{V}} \left| \bar{p}_N(v) 
- \frac{1}{N}\sum_{i=1}^{N}\mathbb{E}[p_i(v)] \right| 
+ \sup_{v \in \mathcal{V}} \left| \frac{1}{N}\sum_{i=1}^{N}\mathbb{E}[p_i(v)] - p^*(v) \right|\\
    &\leq \sqrt{\frac{2\log(2|\mathcal{V}|/\delta)}{N}} + 2\eta^2.
\end{align*}
\end{proof}

\section{Extension of the Theoretical Result to Grouped Watermarking Settings}
\label{app:grouped}

The theoretical analysis in Section~\ref{sec:methodology} assumes full independence across all $N$ providers.  In practice, however, certain providers may share common watermarking toolkits, licensing agreements, or underlying model families, inducing statistical dependence among their perturbation vectors.

We show here that the convergence guarantee extends naturally to a \emph{grouped} setting in which providers are partitioned into independent clusters, with only conditional independence required within each cluster. We also relax the unbiasedness assumption by allowing models within a group to share a common perturbation component.

\begin{assumption}[Grouped Perturbations]\label{assump:group}
Consider a set of $N$ providers. Suppose they can be partitioned into $M$ groups:
\[
G_1, \ldots, G_M, \quad \bigcup_{g=1}^M G_g = \{1,\ldots, N\}, \quad G_g \cap G_{g'} = \emptyset \text{ for } g\neq g',
\]
with $\sum_{g=1}^M |G_g| = N$. For each group $g$, let $W_g$ be a group-level latent variable. Each provider is associated with a random perturbation vector $\delta_i(\cdot, x)$ that modulates the output distribution. For every context $x \in \mathcal{X}$, we assume the following properties hold for $\{\delta_i(\cdot, x)\}_{i=1}^N$:

\textbf{(a) Bounded Magnitude:} The perturbation magnitude is uniformly bounded by a constant $\xi \leq 1$. Specifically, $\|\delta_i(\cdot, x)\|_\infty \leq \xi$ for all providers $i$.

\textbf{(b) Group independence structure:} \emph{Across groups}, the collections $
\bigl(\{\delta_i(\cdot,x)\}_{i\in G_g}, W_g\bigr)_{g=1,\ldots,M}$
are mutually independent; \emph{within each group}, conditional on $W_g$, the perturbations $\{\delta_i(\cdot, x)\}_{i \in G_g}$ are mutually independent. Specifically, these two assumptions imply 
\[
\mathbb{P}\biggl( \bigcap_{i=1}^N\{\delta_i(\cdot, x) \in A_i\} \biggm| W_1, \ldots, W_M\biggr)
  = \prod_{g=1}^{M} \prod_{i \in G_g}
  \mathbb{P} \bigl(\delta_i(\cdot, x) \in A_i \mid W_g \bigr)
\]
for any measurable sets $A_1, \ldots, A_N$.

\textbf{(c) Group-specific Bias:} For each $g \in \{1, \ldots, M\}$ and $i \in G_g$,
\[
\mathbb{E}[\delta_i(\cdot, x) \mid W_g] = b_g(\cdot, x),
\]
where $b_g(\cdot, x)$ is a group-specific bias function.

\textbf{(d) Conditionally Bounded Expected Variance of the Idiosyncratic Perturbation:} 
For each $g \in \{1, \ldots, M\}$ and $i \in G_g$, we decompose the perturbation into a group-specific component and an idiosyncratic component as
\[
\delta_i(\cdot,x) = b_g(\cdot,x) + \varepsilon_i(\cdot,x),
\]
where \(b_g(\cdot,x)\) is defined above and shared by all providers within group \(G_g\), and \(\varepsilon_i(\cdot,x)\) denotes the idiosyncratic perturbation. The conditional expectation of the weighted variation of the idiosyncratic perturbations across the vocabulary (weighted by the consensus probability) is bounded by a constant \(\eta^2\). That is,
\[
\mathbb{E}\bigl[
\mathrm{Var}_{u \sim p^*}\bigl(\varepsilon_i(u,x)\bigr) \mid
W_g \bigr]
\leq \eta^2
\]
where
\[
\mathrm{Var}_{u \sim p^*}(\varepsilon_i(u,x)) := \sum_u p^*(u | x) \Bigl(\varepsilon_i(u,x) - \sum_v p^*(v | x)\varepsilon_i(v,x) \Bigr)^2.
\]     
\end{assumption}
Define the group consensus distribution for group $g$ as 
\begin{equation} 
\label{eq:group_concensus}
p_g^\dagger(v|x) := \frac{p^*(v | x)\exp(b_g(v,x))}
     {\sum_{u \in \mathcal{V}} p^*(u | x)\exp(b_g(u,x))},
\end{equation}
and the group-size-weighted average $\bar{p}^\dagger(\cdot | x) := \frac{1}{N}\sum_{g=1}^{M} n_g p_g^\dagger(\cdot | x)$.
The irreducible group bias is defined as 
\begin{equation}
\label{eq:group_bias_term}
 B(x) := \left\|\bar p^\dagger(\cdot|x)-p^*(\cdot|x)\right\|_\infty =
\sup_{v\in\mathcal V} \left| \bar p^\dagger(v|x)-p^*(v|x) \right|.
\end{equation}

\begin{theorem}
\label{thm:group}
Under Assumption~\ref{assump:group}, for any fixed context \(x\), let $\bar{p}_N(\cdot|x) = \frac{1}{N}\sum_{i=1}^N p_i(\cdot|x)$ be the aggregated distribution. Then, for any $\delta > 0$, with probability at least $1-\delta$,
\[
\bigl\| \bar{p}_N(\cdot \mid x) - p^*(\cdot \mid x) \bigr\|_{\infty}
\lesssim \sqrt{ \frac{ \log(|\mathcal V|/\delta)}{N}} +\eta^2 +  B(x).
\]
\end{theorem}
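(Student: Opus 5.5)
We follow the architecture of the proof of Theorem~\ref{thm:convergence}, now with three error terms instead of two. Fix $x$, drop it from notation, write $n_g=|G_g|$, and let $\mathbf W=(W_1,\dots,W_M)$. For each $v$ we use the triangle inequality
\[
|\bar p_N(v)-p^*(v)| \le \Bigl|\bar p_N(v)-\tfrac1N\textstyle\sum_i \mathbb E[p_i(v)\mid \mathbf W]\Bigr| + \Bigl|\tfrac1N\textstyle\sum_i \mathbb E[p_i(v)\mid \mathbf W]-\bar p^\dagger(v)\Bigr| + |\bar p^\dagger(v)-p^*(v)|.
\]
The last term is at most $B(x)$ by definition~\eqref{eq:group_bias_term}, so only the first two need work.

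\textbf{Concentration term.} By Assumption~\ref{assump:group}(b), conditional on $\mathbf W$ the $N$ variables $p_i(v)\in[0,1]$ are mutually independent. Their conditional means may differ across groups, which is harmless. Hoeffding's inequality applied conditionally gives
\[
\mathbb P\bigl(|\cdot|\ge t \mid \mathbf W\bigr)\le 2e^{-2Nt^2}.
\]
Since this bound does not depend on $\mathbf W$, it also holds unconditionally after taking expectations. A union bound over $v\in\mathcal V$ then yields the $\sqrt{\log(|\mathcal V|/\delta)/N}$ term.

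\textbf{Conditional bias term.} We treat each group as a copy of the single-provider bias analysis with base distribution $p_g^\dagger$ in place of $p^*$. For $i\in G_g$,
\[
p_i(v)=\frac{p^*(v)e^{b_g(v)}e^{\varepsilon_i(v)}}{\sum_u p^*(u)e^{b_g(u)}e^{\varepsilon_i(u)}}=\frac{p_g^\dagger(v)e^{\varepsilon_i(v)}}{\sum_u p_g^\dagger(u)e^{\varepsilon_i(u)}}.
\]
This is exactly the form treated in Steps 1--4 of the proof of Theorem~\ref{thm:convergence}, with perturbation $\varepsilon_i$, conditional on $W_g$. The required conditions check out as follows.
\begin{itemize}
\item \emph{Zero mean.} $\mathbb E[\varepsilon_i\mid W_g]=0$ by Assumption~\ref{assump:group}(c).
\item \emph{Boundedness.} $|\varepsilon_i|\le|\delta_i|+|b_g|\le 2\xi$, since $b_g$ is a conditional mean of a quantity bounded by $\xi$.
\end{itemize}
We then recentre $\varepsilon_i$ by its $p_g^\dagger$-mean and apply the bounds $1+y\le e^y\le 1+y+y^2$ for small $|y|$ (absorbing the constant from $2\xi$, or assuming $\xi\le 1/2$). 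Taking conditional expectations, the argument of Step~4 gives
\[
|\mathbb E[p_i(v)\mid W_g]-p_g^\dagger(v)|\lesssim \mathbb E[\mathrm{Var}_{u\sim p_g^\dagger}(\varepsilon_i(u))\mid W_g].
\]
Averaging over $i$ with weights $n_g/N$ then gives the second term $\lesssim$ this variance bound.

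\textbf{Main obstacle.} The step I expect to be hardest is a mismatch between what the bias bound produces and what Assumption~\ref{assump:group}(d) controls. The variance appearing above is weighted by $p_g^\dagger$, whereas the assumption bounds the $p^*$-weighted variance. To bridge the two, I would use $e^{-2\xi}\le p_g^\dagger(u)/p^*(u)\le e^{2\xi}$, which holds because $\|b_g\|_\infty\le\xi$. Together with the fact that the variance is the minimum over centring constants $c$ of $\sum_u p(u)(\varepsilon_i(u)-c)^2$, this gives
\[
\mathrm{Var}_{p_g^\dagger}(\varepsilon_i)\le e^{2\xi}\,\mathrm{Var}_{p^*}(\varepsilon_i)\le e^2\,\mathrm{Var}_{p^*}(\varepsilon_i).
\]
A second subtlety is the zero-mean property after recentring: the recentring constant is random, but it is subtracted uniformly in $v$, so it cancels under softmax shift invariance exactly as in Step~1. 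Combining the three terms gives the claimed bound with probability at least $1-\delta$.
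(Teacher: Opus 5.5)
Your proposal is correct and follows essentially the same route as the paper: the same three-term decomposition around $\mu_W$ and $\bar p^\dagger$, and conditional Hoeffding plus a union bound. It also recentres $\varepsilon_i$ by its $p_g^\dagger$-mean (conditional zero mean is preserved because $p_g^\dagger$ is $W_g$-measurable, which is the point your ``second subtlety'' should rest on) and uses the same bridge $\mathrm{Var}_{u\sim p_g^\dagger}(\varepsilon_i(u))\le e^{2\xi}\,\mathrm{Var}_{u\sim p^*}(\varepsilon_i(u))$ via the density ratio and the variational form of the variance. The only cosmetic difference is that the paper bounds the second-order term through the nonnegative remainder $R_i(v)=e^{\widetilde\varepsilon_i(v)}-1-\widetilde\varepsilon_i(v)$ with a constant $C_\xi$ (valid since $\|\widetilde\varepsilon_i\|_\infty\le 4\xi$), which avoids the $|y|\le 1$ restriction you hedge about when re-running $1+y\le e^y\le 1+y+y^2$.
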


\begin{proof}
For brevity, we again omit the dependency on $x$ in the notation throughout this proof. Recall that the perturbed distribution for model $i$ is given by
\begin{equation}
\label{eq:p_iv}
p_i(v) = \frac{p^*(v)\exp(\delta_i(v))}
       {\sum_{u \in \mathcal{V}} p^*(u)\exp(\delta_i(u))} = \frac{p^*(v)\exp(b_g(v)+\varepsilon_i(v))}
       {\sum_{u \in \mathcal{V}} p^*(u)\exp(b_g(u)+\varepsilon_i(u))} = \frac{p_g^\dagger(v)\exp(\varepsilon_i(v))}      {\sum_u p_g^\dagger(u)\exp(\varepsilon_i(u))},
\end{equation}
where $p_g^\dagger(\cdot)$ is defined in~\eqref{eq:group_concensus}. Denote 
\[
\mu_W(v) := \frac{1}{N} \sum_{g=1}^{M} \sum_{i \in G_g} \mathbb{E}[p_i(v) \mid W_g].
\]

By the triangle inequality,
\begin{equation}\label{eq:group_triangle}
  \|\bar{p}_N - p^*\|_\infty \leq \underbrace{\|\bar{p}_N -\mu_W\|_\infty}_{\text{Concentration}}  +\underbrace{\|\mu_W -\bar{p}^\dagger\|_\infty}_{\text{Second-order idiosyncratic effect}}  +\;\underbrace{\|\bar{p}^\dagger - p^*\|_\infty}_{\text{group bias $B(x)$}}.
\end{equation}
The third term equals $B(x)$ by definition~\eqref{eq:group_bias_term}. 
\paragraph{Step 1: Concentration around the mean.} We now bound the first term. By Assumption~\ref{assump:group}(b), for $i \in G_g$, the conditional law of $\delta_i$ given $W$ depends only on $W_g$. Thus, we have
\[
\mathbb{E}[p_i(v) \mid W]  = \mathbb{E}[p_i(v) \mid W_g] \qquad \text{and} \qquad \mu_W(v)  = \frac{1}{N} \sum_{i=1}^{N} \mathbb{E}[p_i(v) \mid W].
\]
In addition, conditional on $W=(W_1,\ldots,W_M)$, the perturbations $\delta_1,\ldots,\delta_N$ are mutually independent. Since $p_i(v)$ is a measurable function of $\delta_i$, the random variables $p_1(v), \ldots, p_N(v)$ are also mutually independent conditional on $W$. As $p_i(v) \in [0,1]$ is bounded for $i \in \mathbb{N}$. By Hoeffding's inequality \cite{hoeffding1963probability}, we have
\[
  \mathbb{P} \bigl( |\bar{p}_N(v) - \mu_W(v) | \geq t \mid W \bigr)
  \leq 2\exp(-2Nt^2).
\]
Taking expectation over $W$, the same inequality holds
unconditionally. A union bound over $v \in \mathcal{V}$
then yields: with probability at least $1 - \delta$,
\begin{equation}\label{eq:group_concentration}
  \|\bar{p}_N - \mu_W\|_\infty  \leq  \sqrt{\frac{\log(|\mathcal{V}|/\delta)}{2N}}.
\end{equation}

\paragraph{Step 2: Shift invariance and centering.}
Denote $\widetilde\varepsilon_i(v):=\varepsilon_i(v) - \sum_u p_g^\dagger(u)\varepsilon_i(u)$, which ensures
\begin{equation}
\label{eq:centering_tilde_eps}
\sum_{u} p_g^\dagger(u)\widetilde\varepsilon_i(u)=0.
\end{equation}
By shift invariance of the softmax function, we have
\[
p_i(v)=
\frac{p_g^\dagger(v)\exp(\widetilde\varepsilon_i(v))}
     {\sum_u p_g^\dagger(u)\exp(\widetilde\varepsilon_i(u))}.
\]
By Assumption~\ref{assump:group}(c) and the definition of the group consensus~\eqref{eq:group_concensus}, we have that \(p_g^\dagger\) is \(W_g\)-measurable and thus, for every $v \in \mathcal{V}$
\begin{align*}
\mathbb{E}[\widetilde{\varepsilon}_i(v)\mid W_g] &= \mathbb{E}[{\varepsilon}_i(v)\mid W_g] - \sum_{u \in \mathcal{V}}  \mathbb{E}[p_g^\dagger(u) \varepsilon_i(u)\mid W_g] \\
&= \underbrace{\mathbb{E}[{\varepsilon}_i(v)\mid W_g]}_{0} - \sum_{u \in \mathcal{V}} p_g^\dagger(u) \underbrace{\mathbb{E}[\varepsilon_i(u)\mid W_g]}_0 = 0.
\end{align*}
\paragraph{Step 3: Bounding the second-order idiosyncratic effect term.} We write
\[
\|\mu_W-\bar p^\dagger\|_\infty 
= \sup_{v\in\mathcal V} \left| \frac1N \sum_{g=1}^M \sum_{i\in G_g} \left(\mathbb E[p_i(v)\mid W_g] - p_g^\dagger(v) \right) \right| 
\le \frac1N \sum_{g=1}^M \sum_{i\in G_g} \sup_{v\in\mathcal V} \bigl|  \mathbb E[p_i(v)\mid W_g] -  p_g^\dagger(v) \bigr|  
\]

Let $R_i(v):=e^{\widetilde\varepsilon_i(v)}-1-\widetilde\varepsilon_i(v)$ and $A_i:=\sum_u p_g^\dagger(u)R_i(u)$. Then, using~\eqref{eq:centering_tilde_eps}, we have
\[
p_i(v) - p_g^\dagger(v)  = \frac{p_g^\dagger(v)(1 + \widetilde{\varepsilon}_i(v) + R_i(v))}{1 + A_i} - p_g^\dagger(v) = p_g^\dagger(v) \widetilde{\varepsilon}_i(v) + p_g^\dagger(v) \frac{R_i(v) - A_i - A_i\widetilde{\varepsilon}_i(v)}{1 + A_i}.
\]
By the result of Step 2, 
\[
\mathbb{E}[p_i(v) \mid W_g] - p_g^\dagger(v) = \underbrace{p_g^\dagger(v) \mathbb{E}[\widetilde{\varepsilon}_i(v) \mid W_g]}_{=0} + \mathbb{E}\biggl[p_g^\dagger(v)\frac{R_i(v) - A_i - A_i\widetilde{\varepsilon}_i(v)}{1 + A_i} \biggm | W_g\biggr].
\]

Since $R_i(v)=e^{\widetilde{\varepsilon}_i(v)}-1-\widetilde{\varepsilon}_i(v)\ge 0$ for all $v$, we have $A_i=\sum_u p_g^\dagger(u)R_i(u)\ge 0$ and thus
\begin{equation*}
p_g^\dagger(v) \left|\frac{R_i(v) - A_i - A_i\widetilde{\varepsilon}_i(v)}{1 + A_i} \right| \le p_g^\dagger(v) \left( |R_i(v)| + |A_i| + |A_i\widetilde{\varepsilon}_i(v)| \right).
\end{equation*}

By Assumption~\ref{assump:group}(a), $\|\delta_i\|_\infty\le \xi$. As $\varepsilon_i(v) = \delta_i(v)-\mathbb E[\delta_i(v)| W_g]$, we have $\|\varepsilon_i\|_\infty\le 2\xi$. It follows that $\|\widetilde{\varepsilon}_i\|_\infty\le 4\xi$.
Then we have $|R_i(v)| = |e^{\widetilde{\varepsilon}_i(v)} - 1 - \widetilde{\varepsilon}_i(v)| \leq C_{\xi} \widetilde{\varepsilon}_i(v)^2$ for all $v \in \mathcal{V}$, where $C_\xi$ is a constant depending only on $\xi$, and hence $|A_i| = \left|\sum_u p_g^\dagger(u)R_i(u)\right| \le \sum_u p_g^\dagger(u)|R_i(u)|  \le C_\xi \sum_u p_g^\dagger(u)\widetilde\varepsilon_i(u)^2 = C_\xi\mathrm{Var}_{u\sim p_g^\dagger}(\varepsilon_i(u))$. Using these bounds, we obtain
\begin{equation*}
p_g^\dagger(v) \left|\frac{R_i(v) - A_i - A_i\widetilde{\varepsilon}_i(v)}{1 + A_i} \right| \le C_\xi' \mathrm{Var}_{u\sim p_g^\dagger}(\varepsilon_i(u)).
\end{equation*}

and hence
\begin{equation}
\label{eq:group_softmax_bound}
\sup_{v\in\mathcal V} \left| \mathbb E[p_i(v)\mid W_g]-p_g^\dagger(v) \right| \le C_\xi' \mathbb E \left[ \mathrm{Var}_{u\sim p_g^\dagger}(\varepsilon_i(u)) \mid W_g \right].
\end{equation}

We next relate \(\mathrm{Var}_{u\sim p_g^\dagger}(\varepsilon_i(u))\) to \(\mathrm{Var}_{u\sim p^*}(\varepsilon_i(u))\). Since \(|b_g(u)|\le \xi\), on the support of \(p^*\) we have
\[
    \frac{p_g^\dagger(u)}{p^*(u)}
    = \frac{\exp(b_g(u))}{\sum_{z\in\mathcal V}p^*(z)\exp(b_g(z))}
    \le \frac{e^\xi}{e^{-\xi}} = e^{2\xi}.
\]
Therefore, 
\[
\begin{aligned}
    \mathrm{Var}_{u\sim p_g^\dagger}(\varepsilon_i(u))
    &=
    \inf_{a\in\mathbb R} \biggl\{
    \sum_{u\in\mathcal V}p_g^\dagger(u)(\varepsilon_i(u)-a)^2 \biggr\} 
    \le
    \sum_{u\in\mathcal V}p_g^\dagger(u)
    \left(
        \varepsilon_i(u)-\sum_{z\in\mathcal V}p^*(z)\varepsilon_i(z)
    \right)^2  \\
    &\le
    e^{2\xi}
    \sum_{u\in\mathcal V}p^*(u)
    \left(
        \varepsilon_i(u)-\sum_{z\in\mathcal V}p^*(z)\varepsilon_i(z)
    \right)^2  = e^{2\xi}
    \mathrm{Var}_{u\sim p^*}(\varepsilon_i(u)).
\end{aligned}
\]
Taking conditional expectations and using Assumption~\ref{assump:group}(d), we obtain
\[
\mathbb E\left[
        \mathrm{Var}_{u\sim p_g^\dagger}(\varepsilon_i(u))
        \middle| W_g
    \right]
    \le
    e^{2\xi}
    \mathbb E \left[
        \mathrm{Var}_{u\sim p^*}(\varepsilon_i(u))
        \middle| W_g
    \right]
    \le
    e^{2\xi}\eta^2.
\]
Substituting this into
\eqref{eq:group_softmax_bound}, we then have $\sup_{v\in\mathcal V} \left| \mathbb E[p_i(v)\mid W_g]-p_g^\dagger(v) \right| \le C_\xi' e^{2\xi}\eta^2$. Averaging over all providers gives
\begin{equation}
\label{eq:group_idio_bias}
\|\mu_W-\bar p^\dagger\|_\infty  \le C_\xi' e^{2\xi}\eta^2.
\end{equation}

\paragraph{Step 4: Final derivation.}
Substituting~\eqref{eq:group_concentration}, \eqref{eq:group_idio_bias}, and the definition of $B(x)$ into~\eqref{eq:group_triangle}, we conclude that with probability at least $1 - \delta$, 
\[
\|\bar{p}_N - p^*\|_\infty 
\leq \sqrt{\frac{\log(|\mathcal{V}|/\delta)}{2N}} + C'_\xi e^{2\xi} \eta^2  + B(x) 
\lesssim \sqrt{\frac{\log(|\mathcal{V}|/\delta)}{N}} + \eta^2 + B(x).
\]

\end{proof}

\section{Robustness to Biased Watermark Perturbations}
\label{app:biased_robustness}

Assumption~\ref{assump:unbiased}(c) assumes watermark perturbations as zero-mean around the consensus distribution. We stress-test this condition by deploying a biased green-red-list watermark that consistently promotes a fixed set of tokens with a bias. We evaluate the watermarked baseline and \textsc{WASH} with both the accuracy on the GSM8K task, and the watermark detection z-score.

Table~\ref{tab:biased_watermark_comparison} shows a utility-detection trade-off: stronger bias makes the baseline easier to detect, but rapidly hurts task performance. 
This further validates the practical plausibility of Assumption~\ref{assump:unbiased}(c): rational providers have no incentive to deploy systematically biased perturbations, as the severe accuracy degradation (from 0.443 to 0.023) demonstrates that deviating from the zero-mean condition directly harms generation quality, undermining their own service.
WASH remains effective across all bias values, keeping detector z-scores below the no-detection threshold ($z \le 4$) and improving accuracy over the corresponding biased baseline.

This is consistent with the theoretical prediction in Appendix~\ref{app:grouped} when perturbations carry a shared bias, an irreducible term $B(x)$ persists in the convergence bound, explaining why the detection z-scores under WASH do not vanish completely but remain below the detection threshold.

\begin{table}[h!]
\centering
\small
\caption{Robustness to biased watermark perturbations.}
\label{tab:biased_watermark_comparison}
\resizebox{0.85\textwidth}{!}{
\begin{tabular}{rcccc}
\toprule
\textbf{Bias} 
& \textbf{Watermarked Accuracy} $\uparrow$
& \textbf{Watermarked Detection $z$} $\downarrow$
& \textbf{WASH Accuracy} $\uparrow$
& \textbf{WASH Detection $z$} $\downarrow$\\
\midrule
2.0 & 0.443 & \cellcolor{red!15} 3.71 & 0.563 & \cellcolor{red!00} 1.35\\
4.0 & 0.137 & \cellcolor{red!55} 11.49 & 0.507 & \cellcolor{red!00} 1.92 \\
6.0 & 0.017 & \cellcolor{red!55} 15.55 & 0.403 & \cellcolor{red!15} 2.65 \\
8.0 & 0.010 & \cellcolor{red!55} 15.67 & 0.253 & \cellcolor{red!15} 2.93 \\
10.0 & 0.023 & \cellcolor{red!55} 15.76 & 0.220 & \cellcolor{red!15} 2.95 \\
\bottomrule
\end{tabular}}
\vspace{0.5em}
\end{table}

\section{Experiment Details}
\subsection{Implementation Details}
\label{app:implementation}
\paragraph{Detection thresholds.}
For \textit{generation-time attacks}, we follow the z-score protocol of \citet{liu2024can} to quantify watermark signal strength. Following the original definition, Detection confidence is categorised as high-confidence identification ($z > 10$), low-confidence identification ($4 < z \le 10$), and no detection ($z \le 4$). 

For \textit{final-text rewrite attacks}, perturbation detection on the small set of generated tokens is no longer compatible, so we additionally use native sequence detectors \citep{pan2024markllm}. 
Following \citet{liu2024can}, we generate the sequence to be detected on the C4 dataset \citep{raffel2020exploring}: we truncate each sample to 30 tokens as the prompt, generate 200 additional tokens with the watermarked model, and then perform detection on the generated sequence.  Since z-score magnitudes vary substantially across watermarking schemes, directly comparing z-scores across schemes is not meaningful. We therefore report the removal effect using TPR@5\% FPR, the true-positive rate of identifying a watermarked sequence at a threshold calibrated to falsely flag 5\% of unwatermarked sequences, which is widely used as a watermark robustness measurement due to its consistency across watermark types, text types, and lengths \citep{kirchenbauer2023reliability}. Lower TPR@5\% FPR indicates stronger removal. We categorise a strong watermark signal for TPR@5\% FPR $\ge 75\%$, while a low-confidence detection with $50 \leq$ TPR@5\% FPR $\leq 75\%$.

\subsection{Detailed Experiment Results}
\label{app:wm_exp}
Table \ref{tab:watermark_mixture_same_base} shows the detailed experiment results for Figure~\ref{fig:decay}(a) and (b), the watermark detection evaluation with mixing diverse watermark schemes and base models. For the fixed-base model setting, we randomly sampled 15 ensemble combinations for each mixture amount $N$ and ran the detection task 5 times each to obtain stable z-scores. For the mixed-base model setting, we experimented with a larger range of $N$ from 1 to 8, and sampled 50 mixture combinations for each $N$ due to the large sampling space. The results follow the scaling law: with a larger ensemble size, the watermark signal decays more.

For the fixed-base model setting (Figure~\ref{fig:decay}(a)), the detection signal diminishes rapidly as $N$ increases, with the Llama3.1-8B model dropping from an extremely high z-score ($\approx 150$) to $\approx 10$ at $N=5$. This empirical decay mirrors our theoretical prediction of $O(1/\sqrt{N})$. However, we observe that the signal does not vanish as completely as in the heterogeneous setting. This residual signal likely arises because the shared systematic bias ($\delta_{sys}$) of the identical base model persists across the ensemble, hindering the complete cancellation of artefacts. More importantly, this single-model scenario is rarely available in practice, as providers seldom expose multiple watermarked versions of the same model to end users.

\begin{table}[H]
\centering
\caption{Experiment results with a mixture of watermarks and base models for Figure~\ref{fig:decay}(a) and (b). \colorbox{red!55}{\space \space} indicates high-confidence watermark identification, and \colorbox{red!15}{\space \space} indicates low-confidence watermark identification, while no colour indicates no watermark identified.}
\label{tab:watermark_mixture_same_base}
\resizebox{0.8\textwidth}{!}{%
\begin{tabular}{lc rrrr}
\toprule
\textbf{Base Model} & \textbf{Mixture Amount} & \textbf{Mean Z-score} & \textbf{Std Z-score} & \textbf{Lower Bound} & \textbf{Upper Bound} \\
\midrule
\multirow{5}{*}{\textbf{Llama3.1-8B}} 
 & 1               & \cellcolor{red!55}147.352 & 107.821 & 92.787 & 201.917 \\
 & 2               & \cellcolor{red!55}27.417  & 24.890  & 14.816 & 40.018  \\
 & 3               & \cellcolor{red!55}20.511  & 18.855  & 10.969 & 30.053  \\
 & 4               & \cellcolor{red!55}15.890  & 15.370  & 8.111  & 23.668  \\
 & 5               & \cellcolor{red!15}9.344   & 8.935   & 4.822  & 13.865  \\
\midrule
\multirow{5}{*}{\textbf{Qwen3-8B}} 
 & 1               & \cellcolor{red!55}24.304 & 27.525 & 10.374 & 38.233 \\
 & 2               & \cellcolor{red!15}6.934  & 6.305  & 3.744  & 10.125 \\
 & 3               & 3.308  & 4.266  & 1.150  & 5.467  \\
 & 4               & 1.650  & 2.862  & 0.201  & 3.098  \\
 & 5               & 1.716  & 1.284  & 1.066  & 2.366  \\
\midrule
\multirow{5}{*}{\textbf{Ministral3-8B}} 
 & 1               & \cellcolor{red!55}35.724 & 44.901 & 13.001 & 58.447 \\
 & 2               & \cellcolor{red!55}12.491 & 14.270 & 5.269  & 19.713 \\
 & 3               & 3.241  & 3.758  & 1.339  & 5.143  \\
 & 4               & 1.201  & 2.143  & 0.117  & 2.286  \\
 & 5               & 0.463  & 0.965  & -0.024 & 0.951  \\
 \midrule
\multirow{8}{*}{\textbf{Mixed Model}} 
 & 1               & \cellcolor{red!55}59.909 & 76.434 & 38.722 & 81.095 \\
 & 2               & \cellcolor{red!55}14.956 & 15.020 & 10.793 & 19.120 \\
 & 3               & 3.866  & 7.403  & 1.815  & 5.918  \\
 & 4               & 1.878  & 5.510  & 0.350  & 3.405  \\
 & 5               & 0.087  & 3.666  & -0.929 & 1.103  \\
 & 6               & -0.066 & 2.427  & -0.739 & 0.606  \\
 & 7               & -0.344 & 2.174  & -0.947 & 0.259  \\
 & 8               & -0.342 & 2.669  & -1.082 & 0.398  \\
\bottomrule
\end{tabular}%
}
\end{table}

\section{Fluency-Aware Routing Example}
\label{app:fluency-aware-example}

Figure~\ref{box:case} shows two complete generation flows for a text completion task in the C4 dataset and a reasoning task on the MMLU Law subset, including the Fluency-Aware Routing triggered in between. These generations are conducted by a mixture of three watermarked models: Llama-Aar, Ministral-DIPMark, and Qwen-KGW.

The routing mechanism is triggered mostly on words that are tokenised differently across models. For example, in the MMLU Law task, the word ``negligent'' is tokenised to [`negl', `igent'] by Ministral, but is maintained as a complete word token by Llama and Qwen. During Fluency-Aware Routing, the span is completed using only the specialist models that share the same tokenisation scheme for the target span. After routing, the span is re-synchronised by each model's own tokeniser, ensuring that the span's understanding won't be damaged by misaligned tokenisations.

\begin{figure*}[h]
\begin{casebox}
\footnotesize

\textbf{Task.} C4 Completion

\begin{tcolorbox}[colback=cgood!5,colframe=cgood!60,boxrule=0.5pt,arc=2pt,
left=4pt,right=4pt,top=2pt,bottom=2pt]
\textbf{Prefix.}\;
\emph{To Be Built By Professional Local Builders! Post Modern Located On The Vanderbilt Little Neck Peninsula In Centerport. SD\#6, Open Floor Plan Perfect For Entertaining With}
\smallskip
\hrule
\smallskip
\textbf{Completion.}\;
Open Kitchen, Family Room, \textcolor{cbel}{\textbf{Dining} $\leftarrow$ [` Dining']} And Bar Area. Private Master \textcolor{cbel}{\textbf{Bedroom} $\leftarrow$ [` Bedroom']}, Two Additional En-Suite \textcolor{cbel}{\textbf{Bedrooms} $\leftarrow$ [` Bedrooms']} And One Additional Bed Or Office. All Bedrooms With Walk-In Closets And Bathrooms With Dual Sinks warmer. 1st Likewise With Powder Room, \textcolor{cbel}{\textbf{2nd} $\leftarrow$ [`2', `nd']} With \textcolor{cbel}{\textbf{Laundry.} $\leftarrow$ [` Laundry', `.']} \textcolor{cbel}{\textbf{2.5} $\leftarrow$ [`2', `.', `5']} car garage. \textcolor{cbel}{\textbf{15} $\leftarrow$ [`1', `5']} min to Commuter \textcolor{cbel}{\textbf{Parking} $\leftarrow$ [` Parking']} and L\textcolor{cbel}{\textbf{IRR} $\leftarrow$ [`IRR']}
\end{tcolorbox}

\hrule
\smallskip
\smallskip

\textbf{Task.} MMLU Law Reasoning
\begin{tcolorbox}[colback=cgood!5,colframe=cgood!60,boxrule=0.5pt,arc=2pt,
left=4pt,right=4pt,top=2pt,bottom=2pt]
\textbf{Reasoning.}\;
The fan was \resync{55}{[`5', `5']}\% \resync{negligent}{[` negligent']} (speeding) and the driver was \resync{45}{[`4', `5']}\% \resync{negligent}{` negligent'} (red light violation). The friend was not found \resync{negligent.}{[` negl', `igent', `.']} Under modified comparative \resync{negligence}{[` negl', `igence']}, if the plaintiff is not more than \resync{50}{[`5', `0']}\% \resync{negligent}{[` negl', `igent']}, they can recover damages, but they are entitled to a percentage of the damages according to their relative \resync{negligence.}{[` negligence', `.']} However, the question states that the jury found \$\resync{100,000}{[`100', `,', `000']} in damages. If the fan was blooded, and the friend suffered \$\resync{100}{[`1', `0', `0']},\resync{000}{[`000']} in damages, but the friend's \resync{negligence}{[` negl', `igence']} was only 0\% (not found \resync{negligent}{[` negligent']}), then the friend can recover the full \$\resync{100}{[`1', `0', `0']},\resync{000.}{[`000', `.']} The answer is A.
\end{tcolorbox}

\end{casebox}
\caption{Examples of Fluency-Aware Routing on C4 completion and MMLU Law reasoning tasks. The routed spans are shown in bold blue, with the list of routed tokens appended.}
\label{box:case}
\end{figure*}

%%%%%%%%%%%%%%%%%%%%%%%%%%%%%%%%%%%%%%%%%%%%%%%%%%%%%%%%%%%%%%%%%%%%%%%%%%%%%%%
%%%%%%%%%%%%%%%%%%%%%%%%%%%%%%%%%%%%%%%%%%%%%%%%%%%%%%%%%%%%%%%%%%%%%%%%%%%%%%%

\end{document}